\documentclass{article}

\usepackage[nonatbib, preprint]{neurips_2026}
\usepackage[numbers]{natbib}

\usepackage[utf8]{inputenc} % allow utf-8 input
\usepackage[T1]{fontenc}    % use 8-bit T1 fonts
\usepackage{hyperref}       % hyperlinks
\usepackage{url}            % simple URL typesetting
\usepackage{booktabs}       % professional-quality tables
\usepackage{amsfonts}       % blackboard math symbols
\usepackage{nicefrac}       % compact symbols for 1/2, etc.
\usepackage{microtype}      % microtypography
\usepackage{xcolor}         % colors
\usepackage{graphicx}
\usepackage{subcaption}
\usepackage{booktabs, multirow, adjustbox}
\usepackage{bbold}

\usepackage{amsmath}
\usepackage{amsthm}
\usepackage{amssymb}
\usepackage{mathtools}
\usepackage{stmaryrd}
\usepackage{bbm}
\usepackage{thmtools, thm-restate}
\newtheorem{definition}{Definition}

\newtheorem{theorem}{Theorem}
\newtheorem{proposition}{Proposition}
\newtheorem{remark}{Remark}

\title{Generalized Conformal Predictive Systems Under Distributional Shifts}

\author{%
  Jef Jonkers \\
  IDLab \\
  Department of Electronics \\
and Information Systems \\
  Ghent University \\
  Ghent, Belgium \\
  \texttt{jef.jonkers@ugent.be} \\
  \And
  Johanna Ziegel \\
  Seminar for Statistics \\ 
  ETH Zurich \\ 
  Zurich, Switzerland \\
  \texttt{ziegelj@ethz.ch}
}

\begin{document}

\maketitle

\begin{abstract}
    Conformal predictive systems (CPS) output calibrated bands of CDFs under exchangeability. We extend generalized CPS to non-exchangeable settings by encoding distributional shifts through observation-specific permutation weights. This yields shift-aware predictive systems that remain valid whenever the test point is, conditionally on the unordered sample, a weighted draw from the observed atoms. Since such weights are typically estimated, we introduce weight-uncertainty boxes and construct robust CPS envelopes with finite-sample or asymptotic confidence guarantees. We derive efficient computation for conformity-measure CPS, conformal binning, and conformal isotonic distributional regression. Experiments under covariate shift and feedback-driven biomolecular design show calibrated predictive bands that widen under stronger shifts and tighten as sample size increases.
\end{abstract}

\section{Introduction}
Reliable uncertainty quantification is central to deploying machine learning systems in decision-making settings. Beyond accurate point predictions, practitioners increasingly require \emph{predictive distributions} that are statistically compatible with realized outcomes, so that downstream quantities (e.g., tail risks, threshold events, and prediction intervals at multiple levels) can be trusted \citep{gneiting_probabilistic_2007}. A classical way to formalize this requirement is \emph{calibration}: predicted probabilities should match empirical frequencies, prediction intervals should attain their target coverage, and predictive distributions should induce calibrated intervals across levels.
In distributional regression, the goal is therefore to produce predictive distributions that are as informative as possible while remaining calibrated.

A major obstacle is that forecasting is inherently out-of-sample: we predict outcomes that have not yet been observed. Guaranteeing out-of-sample calibration for a \emph{single} predictive distribution is generally out of reach without strong assumptions. Conformal methods sidestep this impossibility by outputting a \emph{set} of probabilistic forecasts, i.e., a credal set, that is guaranteed to contain at least one calibrated forecast under weak symmetry assumptions. For real-valued outcomes, these credal sets take the form of \emph{predictive systems}: sets of CDFs represented by lower and upper CDF envelopes \citep{vovk_nonparametric_2019,allen_-sample_2025}. Conformal predictive systems (CPS) provide such envelopes with probabilistic calibration guarantees \citep{vovk_nonparametric_2019}, and generalized conformal predictive systems (GCPS) extend this idea to broader calibration notions by leveraging in-sample calibrated distributional regression procedures \citep{allen_-sample_2025}.

A distinctive feature of predictive systems is that their width is informative: it quantifies the epistemic uncertainty in what calibration can be guaranteed from the available data \citep{allen_-sample_2025}. Recent work by \citet{allen_-sample_2025} explores the conceptual framework underlying GCPS. Rather than viewing them as tied to conformity scores, one can see them as \emph{calibration wrappers}: if we have a procedure that is calibrated \emph{in-sample}, then a conformal-style construction can turn it into a predictive system that contains an out-of-sample calibrated predictive distribution under exchangeability \citep{allen_-sample_2025}. A closely related perspective appears for point forecasting \citep{van_der_laan_generalized_2025}. This viewpoint motivates a simple question: \emph{which in-sample calibrated forecasting procedures can we wrap, and how far beyond exchangeability can we push the resulting guarantees?}
For distributional regression, prominent in-sample calibrated building blocks include binning-based methods and isotonic distributional regression (IDR) \citep{henzi_isotonic_2021}.

In this paper, we focus on the next bottleneck: \emph{distributional shift}. The exchangeability assumption underpinning the classical out-of-sample guarantees is routinely violated by covariate shift, label shift, temporal drift, and adaptive or feedback-driven data collection. Prior work has already begun extending CPS beyond exchangeability in specific regimes: for instance, \citet{jonkers_conformal_2024, jonkers_conformal_2025} develop weighted CPS for \emph{known covariate shift} using likelihood-ratio weights, focusing on conformity-measure CPS and probabilistic calibration. Our key idea is to retain the predictive-system viewpoint while relaxing exchangeability through a lightweight interface based on \emph{permutation weights}. Conditioning on the unordered content of the observed sample, non-exchangeability can be represented as a weighted lottery over which observed atom becomes the test point \citep{tibshirani_conformal_2019, prinster_conformal_2024}. These weights allow us to extend GCPS to non-exchangeable sequences and to construct predictive envelopes that remain calibrated under arbitrary distributional shifts.

In practice, the shift weights are rarely known and must be estimated. To account for this additional epistemic uncertainty, we introduce \emph{weight-uncertainty boxes}: coordinatewise multiplicative bounds around estimated weights. Optimizing the predictive system over this box yields robust CDF envelopes with confidence guarantees that the envelope contains a calibrated predictive distribution, while the envelope thickness reflects both finite-sample uncertainty and weight misspecification.

\paragraph{Contributions.} To push GCPS beyond exchangeability while retaining calibrated predictive distributions, we make the following contributions: 
(i) We extend GCPS to non-exchangeable sequences by encoding departures from exchangeability via permutation weights and constructing shift-aware predictive envelopes.
(ii) We develop robust predictive systems under \emph{weight uncertainty}, based on weight-uncertainty boxes, with finite-sample or asymptotic confidence guarantees that the envelope contains a calibrated predictive CDF.
(iii) We provide efficient evaluation rules and closed forms for important instances (including binning- and IDR-based procedures), and demonstrate empirically that shift-aware weighting improves predictive performance while preserving calibration under feedback covariate shift \citep{fannjiang_conformal_2022}.

\section{Problem setup}
\label{sec:setup}
\paragraph{Data and goal.}
We observe covariate-response pairs $Z_i=(X_i,Y_i)\in\mathcal Z:=\mathcal X\times\mathbb R$ for $i=1,\dots,n$ and a new covariate $X_{n+1}$, with an associated (unobserved) response $Y_{n+1}$. Our goal is to output, given a test covariate $X_{n+1}$, a GCPS that is guaranteed to contain at least one calibrated predictive distribution for $Y_{n+1}$, even when the joint sequence $(Z_1,\dots,Z_{n+1})$ is \emph{not} exchangeable.

\begin{definition}[Predictive system]
Let $\mathcal M_1$ be the set of probability measures on $\mathcal Z$ with finite support. A \emph{distributional regression procedure} is a measurable map $G:\mathcal M_1\times\mathcal X\times\mathbb R\to[0,1]$, $(\mu,x,y)\mapsto G[\mu,x](y)$, such that for each $(\mu,x)$ the function $y\mapsto G[\mu,x](y)$ is a CDF.

Given a finite (nonnegative) measure $\mu$ on $\mathcal Z$, a covariate $x$, and an insertion weight $c>0$, define the predictive-system envelopes for all $y \in \mathbb{R}$ by
\begin{align}
    \Pi_\ell[\mu,x;c](y) &:= \inf_{y'\in\mathbb R} G\!\left[P_{\mu+c\,\delta_{(x,y')}},\,x\right](y), 
    \quad 
    \Pi_u[\mu,x;c](y) := \sup_{y'\in\mathbb R} G\!\left[P_{\mu+c\,\delta_{(x,y')}},\,x\right](y), \label{eq:PS_up_main}
\end{align}
where the infimum and supremum are taken over candidate insertion values $y'\in\mathbb R$ for the test response (\emph{not} over the calibration sample). Let $\Pi[\mu,x;c]$ denote the set of CDFs whose graphs lie between $\Pi_\ell[\mu,x;c]$ and $\Pi_u[\mu,x;c]$. Here, $P_\nu := \nu/\nu(\mathcal Z)$ denotes the normalized version of any finite nonnegative measure $\nu$ with $\nu(\mathcal Z) > 0$. 
\end{definition}

\paragraph{Non-exchangeability through permutation weights.}
Instead of assuming exchangeability, we model ordering effects through \emph{permutation weights}. Informally, we condition on the unordered multiset of the $(n{+}1)$ realized atoms and allow the identity of the test atom among them to be non-uniform. Formally, for a sequence $z_{1:n+1}\in\mathcal Z^{n+1}$, let $w_i(z_{1:n+1})\in[0,1]$, $i=1,\dots,n+1$, be the probability that the test point $Z_{n+1}$ equals the $i$th atom conditional on the unordered set of observed atoms, that is,
\begin{equation}
  \label{eq:weight-measure}
  \mathbb P\!\left(Z_{n+1}=z_i \,\middle|\, \{Z_1,\dots,Z_{n+1}\}=\{z_1,\dots,z_{n+1}\}\right) = w_i(z_{1:n+1}).
\end{equation}
When the data are exchangeable, $w_1=\cdots=w_{n+1}=1/(n+1)$. While the weights defined at \eqref{eq:weight-measure} are normalized by definition, in all the following arguments, one can also work with non-normalized weights without any changes. For ease of notation, we assume that all components of $Z_{1:n+1}$ are distinct almost surely. All our results also hold if repeated values occur with positive probability, but the notation would be slightly more involved.

\paragraph{Weighted empirical law at test time.}
Given a candidate value $y' \in \mathbb R$ and weights $w^\star_i(y') = w_i(Z_{1:n},(X_{n+1},y'))$, define the weighted observed measure and the (hypothetical) full weighted measure
\begin{equation}
    \label{eq:mu_obs_tilde}
    \mu_{\text{obs}}^{y'} := \sum_{i=1}^n w^\star_i(y')\,\delta_{Z_i},
    \qquad
    \widetilde\mu := \sum_{i=1}^{n+1} w^\star_i(Y_{n+1})\,\delta_{Z_i}.
\end{equation}
Since $\sum_{i=1}^{n+1} w^\star_i(y') = 1$, $\widetilde\mu$ is already a probability measure and $P_{\widetilde\mu} = \widetilde\mu$. In the shift-aware construction (Section~\ref{sec:wgcps}), we set the insertion weight in \eqref{eq:PS_up_main} to the realized test weight $c=w^\star_{n+1}(y')$ so that $\mu_{\text{obs}}^{Y_{n+1}}+w^\star_{n+1}(Y_{n+1})\delta_{(X_{n+1},Y_{n+1})}=\widetilde\mu$. This alignment is what enables calibrated predictive distributions under non-exchangeability.

\paragraph{Known vs.\ uncertain weights.}
In some settings, the weights are known or computable from the data-generating mechanism (e.g., covariate shift with known likelihood ratios), see Appendix \ref{app:permweights}. More commonly, we only have estimated (possibly misspecified) weights $\hat w_i=\hat w_i(Z_{1:n+1})\ge 0$ and must account for additional epistemic uncertainty with regard to the true weights $w^\star=(w_1^\star,\dots,w_{n+1}^\star)$.
We model this uncertainty through a \emph{weight-uncertainty box} (coordinatewise multiplicative bounds) $L_i\,\hat w_i \;\le\; w^\star_i \;\le\; U_i\,\hat w_i$, for $i=1,\dots,n+1$, where $0\le L_i\le U_i<\infty$ may depend on the sample and a user-chosen level $\alpha$.
This defines
\begin{equation}
    \label{eq:W_WU}
    \mathcal{W}^{\text{WU}}_\alpha :=\Big\{
        w\in\mathbb R_+^{n+1}: \ L_i\hat w_i \le w_i \le U_i\hat w_i\ \text{for all } i\Big\},
\end{equation}
which we assume to be a $(1-\alpha)$ confidence region for $w^\star$, i.e.
\begin{equation}
    \label{eq:WU_coverage}
    \mathbb P\!\left(w^\star\in\mathcal W^{\text{WU}}_\alpha\right)\ge 1-\alpha,
\end{equation}
either as an exact (or conservative) finite-sample guarantee for each fixed $n$, or as an asymptotic guarantee in the sense that $\liminf_{n\to\infty}\mathbb P(w^\star\in\mathcal W^{\text{WU}}_\alpha)\ge 1-\alpha$.

In Section~\ref{sec:weight-robust} we construct the predictive-system envelopes over $\mathcal W^{\text{WU}}_\alpha$ to obtain robust CDF bands with confidence guarantees.

\section{Generalized conformal predictive systems under distributional shifts}
\label{sec:wgcps}

This section shows how the GCPS \citep{allen_-sample_2025} extends beyond exchangeability. The key observation is that, conditional on the unordered multiset of atoms, the test point $Z_{n+1}$ is distributed as a \emph{weighted draw} from the $n+1$ observed atoms with probabilities $w_i^*(Y_{n+1})$ from~\eqref{eq:weight-measure}. Consequently, if we build the predictive system using the weighted empirical law $P_{\widetilde\mu}$ in~\eqref{eq:mu_obs_tilde} and insert the test atom with its realized weight $w^\star_{n+1}(Y_{n+1})$, then the predictive system contains a calibrated predictive distribution for $Y_{n+1}$ under the true (possibly non-exchangeable) law. We use the definition of \emph{in-sample isotonic calibration} and \emph{in-sample auto-calibration} as given in \citet[Definition 2.2]{allen_-sample_2025}.

\begin{theorem}[GCPS under known distributional shifts]
    \label{thm:non-exchangeable-known-weights}
    Suppose $Z_{1:n+1}=(Z_1,\dots,Z_{n+1})$ with $Z_i=(X_i,Y_i)\in\mathcal Z$ are drawn from an arbitrary joint law. Let $\mu_{\text{obs}}^{y'}$ and $\widetilde\mu$ be the weighted measures from~\eqref{eq:mu_obs_tilde}.
    
    Let $G$ be a distributional regression procedure and construct the predictive system $\Pi[X_{n+1}]$ via the bounds
    \begin{align*}
    \Pi_\ell[X_{n+1}](y) &:= \inf_{y'\in\mathbb R} G\!\left[P_{\mu_{\text{obs}}^{y'}+w_{n+1}^\star(y')\,\delta_{(X_{n+1},y')}},\,X_{n+1}\right](y), \\
    \Pi_u[X_{n+1}](y) &:= \sup_{y'\in\mathbb R} G\!\left[P_{\mu_{\text{obs}}^{y'}+w_{n+1}^\star(y')\,\delta_{(X_{n+1},y')}},\,X_{n+1}\right](y).
    \end{align*} 
    Assume $\mathcal Z$ is standard Borel and that $(\mu,x)\mapsto G[\mu,x](\cdot)$ is measurable. %Let $H$ be any summary function taking values in a measurable space $(\mathcal H,\mathcal B_{\mathcal H})$, and assume that $(\mu,x)\mapsto H[\mu,x]$ is measurable (with respect to the product $\sigma$-field on $\mathcal M_1\times\mathcal X$).
    
    Define the (random) predictive CDF $F^\star(\cdot) := G[P_{\widetilde\mu},\,X_{n+1}](\cdot)$. Then $F^\star\in \Pi[X_{n+1}]$ almost surely. Moreover:

    \begin{enumerate}
        \item[(i)] \emph{Probabilistic calibration.} If $G$ satisfies in-sample probabilistic calibration (for every finite-support law), then
        \begin{equation*}
            \mathbb{P}\big(F^\star(Y_{n+1})\le u\big)\le u \le \mathbb{P}\big(F^\star(Y_{n+1}-)< u\big) \qquad \text{ for all } u\in(0,1).
        \end{equation*}
        \item[(ii)] \emph{Isotonic calibration.} If $G$ satisfies in-sample isotonic calibration, then
        \begin{equation*}
            \mathbb{P}\big(Y_{n+1}>y \mid \mathcal A(F^\star)\big) \;=\; 1-F^\star(y) \qquad \text{ for all } y\in\mathbb R.
        \end{equation*}
        Here, $\mathcal A(F^\star)$ is the $\sigma$-lattice generated by $F^\star$ with respect to usual stochastic order \citep{arnold_isotonic_2025}.
        \item[(iii)] \emph{Auto-calibration.} If $G$ satisfies in-sample auto-calibration, then
        \begin{equation*}
            \mathbb{P}\big(Y_{n+1}>y \mid F^\star \big) \;=\; 1-F^\star(y)  \qquad \text{ for all } y\in\mathbb R.
        \end{equation*}
    \end{enumerate}
\end{theorem}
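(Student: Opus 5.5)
The plan is to first dispose of the membership claim $F^\star\in\Pi[X_{n+1}]$ and then reduce all three calibration statements to a single conditioning argument on the unordered multiset of atoms.

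\emph{Membership.} By construction, $\mu_{\text{obs}}^{Y_{n+1}}+w^\star_{n+1}(Y_{n+1})\delta_{(X_{n+1},Y_{n+1})}=\widetilde\mu$. Hence $F^\star$ is exactly the function inside the infimum and supremum defining $\Pi_\ell[X_{n+1}]$ and $\Pi_u[X_{n+1}]$, evaluated at $y'=Y_{n+1}$. It follows that $\Pi_\ell[X_{n+1}](y)\le F^\star(y)\le\Pi_u[X_{n+1}](y)$ for every $y$, pointwise on every sample path, so $F^\star$ lies in the band.

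\emph{Conditioning argument.} Let $\mathcal E$ denote the unordered multiset $\{Z_1,\dots,Z_{n+1}\}$; since $\mathcal Z$ is standard Borel, regular conditional distributions given $\mathcal E$ exist. The key observation is that $\widetilde\mu=\sum_{i=1}^{n+1} w_i(Z_{1:n+1})\delta_{Z_i}$ depends only on $\mathcal E$. Indeed, $w_i$ is the conditional probability in \eqref{eq:weight-measure} attached to atom $z_i$, so if the atoms are permuted, the weights are permuted along with them. We should therefore write $\widetilde\mu=\sum_{z\in\mathcal E} w(z)\,\delta_z$, where $w(z)=\mathbb P(Z_{n+1}=z\mid\mathcal E)$. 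This $\mathcal E$-measurability of $w_i$ as a function of its atom is the point that needs care. I would justify it directly from the definition, as a version of the conditional probability that is symmetric in the unordered set, using the standing assumption of distinct atoms. Given this, conditionally on $\mathcal E$, the test point $Z_{n+1}$ is distributed exactly as $\widetilde\mu$.

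Consequently $F_z:=G[\widetilde\mu,x_z]$ is $\mathcal E$-measurable for each atom $z=(x_z,y_z)$, and
\[
F^\star(Y_{n+1})\,\big|\,\mathcal E \;\overset{d}{=}\; F_{Z}(y_Z),\qquad Z\sim\widetilde\mu.
\]
So every conditional-on-$\mathcal E$ statement about $(F^\star,Y_{n+1})$ is an in-sample statement about $G$ applied to the finite-support law $\widetilde\mu$, evaluated with a draw from that same law. This is precisely the setting of in-sample calibration in \citet[Definition 2.2]{allen_-sample_2025}.

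\emph{Parts (i)--(iii).} For (i), in-sample probabilistic calibration of $G$ at $\widetilde\mu$ gives
\[
\widetilde\mu\{z:F_z(y_z)\le u\}\le u\le \widetilde\mu\{z:F_z(y_z-)<u\}.
\]
Taking expectations over $\mathcal E$ yields the claim.

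For (iii), in-sample auto-calibration gives $\widetilde\mu(y_Z>y\mid F_Z)=1-F_Z(y)$. Conditionally on $\mathcal E$, therefore, $\mathbb P(Y_{n+1}>y\mid F^\star,\mathcal E)=1-F^\star(y)$. Because the right-hand side is $\sigma(F^\star)$-measurable, the tower property removes $\mathcal E$.

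For (ii), the same route applies with the $\sigma$-lattice $\mathcal A(F^\star)$ in place of $\sigma(F^\star)$. Conditional expectations given a $\sigma$-lattice are not linear, so the tower step has to be argued through the characterization in \citet{arnold_isotonic_2025}. That characterization says $\mathbb P(Y>y\mid\mathcal A(F))=1-F(y)$ if and only if, for all upper sets $A$ in the stochastic order, $\mathbb E[(\mathbb 1\{Y>y\}-(1-F(y)))\mathbb 1\{F\in A\}]\ge0$ and the corresponding lower-set inequality holds, together with measurability of $1-F(y)$. Each such inequality holds conditionally on $\mathcal E$ by the in-sample isotonic calibration of $G$ at $\widetilde\mu$, and integrating over $\mathcal E$ preserves the inequalities.

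The main obstacles are the $\mathcal E$-measurability of $\widetilde\mu$ and this lattice-version tower step in (ii).
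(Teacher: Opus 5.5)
Your route is the paper's: containment by taking $y'=Y_{n+1}$, then using that, conditionally on the unordered atoms, $Z_{n+1}$ is a draw from $\widetilde\mu$. Each in-sample property of $G$ at the finite-support law $\widetilde\mu$ therefore holds conditionally and survives integration. The paper conditions on $E_{Z_{1:n+1}}$ for (i) and on $P_{\widetilde\mu}$ for (ii)--(iii), but that difference is immaterial. Membership, (i) and (iii) are fine as you wrote them.

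The one concrete flaw is the characterization you invoke in (ii). It omits the equality condition that the definition of a $\sigma$-lattice conditional expectation requires.

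\emph{Why your conditions are too weak.} Every lower set is the complement of an upper set, and the whole space is both. So your lower-set inequality follows from the upper-set inequality together with $\mathbb E[\mathbb 1\{Y_{n+1}>y\}]=\mathbb E[1-F^\star(y)]$. Your conditions thus reduce to measurability, a one-sided inequality on $\mathcal A(F^\star)$, and equality of means, and these do not identify the lattice conditional expectation.

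\emph{Counterexample.} Let $F^\star$ take two values $F_1\le_{\mathrm{st}}F_2$ with probability $1/2$ each. Set $1-F_1(y)=0.4$ and $1-F_2(y)=0.6$, and $\mathbb P(Y_{n+1}>y\mid F^\star=F_k)=0.3,\,0.7$ for $k=1,2$. All your conditions hold. Yet $\mathbb P(Y_{n+1}>y\mid\mathcal A(F^\star))$ takes the values $0.3,\,0.7$, since these are already isotonic. They are not $1-F^\star(y)$.

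\emph{The missing condition and the repair.} The definition used in the paper's Step B (Brunk's) also requires the two-sided identity $\mathbb E[\mathbb 1\{Y_{n+1}\le y\}\mathbb 1_B]=\mathbb E[F^\star(y)\mathbb 1_B]$ for all $B\in\sigma(F^\star)$. Adding it costs nothing in your setup. In-sample isotonic calibration at $\widetilde\mu$ gives this equality conditionally on $\mathcal E$ for every fixed Borel set of CDFs. Integrating over $\mathcal E$ preserves it, exactly as it preserves the inequalities.
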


The prime example of a distributional regression procedure $G$ that satisfies in-sample probabilistic calibration is a classical CPS, for isotonic calibration, it is conformal IDR, and for auto-calibration, it is any binning based method. Details for these three regression procedures are given in Section \ref{sec:comp}.

% \begin{proof}
%     See Appendix ~\ref{sec:wcps-proof}.
% \end{proof}

From now on, we assume that $w_i^\star = w_i^\star(y')$ (and hence also $\mu_{\text{obs}}$) does not depend on $y'$, that is, we are in a covariate shift scenario, see Appendix \ref{app:permweights}. In this case,  $\Pi[X_{n+1}] = \Pi[\mu_{\text{obs}},X_{n+1};w_{n+1}^\star]$, where the right hand side is constructed via~\eqref{eq:PS_up_main}.

\begin{remark}[Computability and avoiding a sweep over $y'$]
    The defining envelopes~\eqref{eq:PS_up_main} take an infimum/supremum over $y'\in\mathbb R$, which naively suggests evaluating $G[P_{\mu_{\text{obs}}+w^\star_{n+1}\delta_{(X_{n+1},y')}},X_{n+1}]$ for many candidate values of $y'$ across the response domain. In practice, this is unnecessary: the same monotonicity and splitting tricks as in \citet{allen_-sample_2025} apply verbatim in our shift-aware setting, since the only change is that $\mu$ is replaced by a weighted measure and the test atom is inserted with weight $w^\star_{n+1}$.

    Concretely, if $G$ is implemented in a \emph{split} manner (parameters of $G$ estimated on an independent estimation set, then held fixed on the calibration set), then evaluating the predictive system for a single test covariate reduces to computations on the calibration set only. Moreover, for many useful choices of $G$ the mapping $y' \mapsto G\!\left[P_{\mu_{\text{obs}}+w^\star_{n+1}\delta_{(X_{n+1},y')}},\,X_{n+1}\right](y)$ is monotone for each fixed $y$, so the inner infimum and supremum are attained at extreme insertions. \emph{Two evaluations of the underlying procedure suffice} in the cases we treat: (i)~conformal binning (CBIN) admits closed-form envelopes obtained by setting $y'$ above and below the bin breakpoint of interest; (ii)~monotone conformity-score CPS (CMCPS) admits the same closed form after replacing outcomes by scores; and (iii)~conformal IDR (CIDR) is obtained by running IDR twice, with the test label set to an extreme low and extreme high value \citep[cf.][Remark~2.9]{allen_-sample_2025}, with weights carried throughout.
\end{remark}

\begin{remark}[Venn-Abers under distributional shift via weighted conformal IDR]
    Venn-Abers predictors \citep{vovk_venn-abers_2014} can be viewed as a special case of conformal IDR applied to binary outcomes (or, equivalently, to threshold events), and thus fall within the broader class of \emph{conformal IDR} methods that combine IDR with GCPS \citep{allen_-sample_2025}. Consequently, our shift-aware GCPS construction, when instantiated with conformal IDR, yields Venn-Abers type calibrated probability forecasts \emph{beyond exchangeability}: the resulting credal set contains a forecast that is calibrated for the out-of-sample test outcome under distributional shifts encoded by the permutation weights $w_i$.
    In addition, the weight-robust construction in Section~\ref{sec:weight-robust} provides a robust analogue when the shift encoded weights are uncertain. To the best of our knowledge, this establishes the first out-of-sample calibration guarantee for Venn-Abers style predictors under distributional shifts (and, correspondingly, under weight uncertainty in the shift model) within the unified predictive-systems framework.
\end{remark}

\section{Weight-uncertain robust conformal predictive systems}
\label{sec:weight-robust}
Section~\ref{sec:wgcps} establishes that, under \emph{known} permutation weights $w^\star_i$, the GCPS $\Pi[\mu_{\text{obs}},X_{n+1};w^\star_{n+1}]$ contains a calibrated predictive CDF for $Y_{n+1}$. In practice, the weights are typically estimated (e.g., via importance weights/density ratios) and may be misspecified. We therefore propagate \emph{weight uncertainty} through the predictive system by defining the GCPS envelopes over the weight-uncertainty box $\mathcal W^{\text{WU}}_\alpha$ from~\eqref{eq:W_WU} which is assumed to satisfy \eqref{eq:WU_coverage}.

For each candidate weight vector $w\in\mathcal W^{\text{WU}}_\alpha$, define the corresponding observed measure $\mu_{\text{obs}}(w) \;:=\; \sum_{i=1}^n w_i\,\delta_{Z_i}$, and build the predictive-system envelopes at $x=X_{n+1}$ with insertion weight $c=w_{n+1}$ using \eqref{eq:PS_up_main}. We then define the \emph{weight-robust} CDF envelopes
\begin{equation}
    \label{eq:WU-env-lower}
    \Pi^{\text{WU},\alpha}_\ell(y) :=  \inf_{w\in\mathcal W^{\text{WU}}_\alpha}\ \Pi_\ell[\mu_{\text{obs}}(w),\,X_{n+1};\,w_{n+1}](y),
\end{equation}

\begin{equation}
    \label{eq:WU-env-upper}
    \Pi^{\text{WU},\alpha}_u(y) := \sup_{w\in\mathcal W^{\text{WU}}_\alpha}\ \Pi_u[\mu_{\text{obs}}(w),\,X_{n+1};\,w_{n+1}](y).
\end{equation}
Let $\Pi^{\text{WU},\alpha}$ denote the set of CDFs that lie between $\Pi^{\text{WU},\alpha}_\ell$ and $\Pi^{\text{WU},\alpha}_u$.

\begin{theorem}[Weight-robust GCPS with confidence]
    \label{thm:WU-robust-alpha}
    Assume $G$ satisfies an in-sample calibration property for every finite-support law. Let $w^\star$ denote the (unknown) true permutation weights and define $\Pi^{\text{WU},\alpha}$ by~\eqref{eq:WU-env-lower}--\eqref{eq:WU-env-upper}. By Theorem \ref{thm:non-exchangeable-known-weights}, $F^\star(\cdot) := G[P_{\widetilde\mu},\,X_{n+1}](\cdot)$ is calibrated.
    
    \begin{enumerate}
    \item[(a)] \textnormal{(Finite-sample confidence guarantee).}
    If $\mathcal W^{\text{WU}}_\alpha$ satisfies the coverage condition~\eqref{eq:WU_coverage}, then     $\mathbb{P}(F^\star \in \Pi^{\text{WU},\alpha})\;\ge\; 1-\alpha$.
    \item[(b)] \textnormal{(Asymptotic analogue).}
    If $\mathcal W^{\text{WU}}_\alpha$ satisfies an asymptotic version of~\eqref{eq:WU_coverage}, then $\liminf_{n\to\infty}\mathbb{P}(F^\star \in \Pi^{\text{WU},\alpha})\;\ge\; 1-\alpha$.
    \end{enumerate}
\end{theorem}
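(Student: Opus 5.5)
The plan is to show the event inclusion $\{w^\star\in\mathcal W^{\text{WU}}_\alpha\}\subseteq\{F^\star\in\Pi^{\text{WU},\alpha}\}$ and then take probabilities. We work in the covariate-shift regime fixed after Theorem~\ref{thm:non-exchangeable-known-weights}, where $w^\star$ does not depend on $y'$, so $\mu_{\text{obs}}(w^\star)=\mu_{\text{obs}}$.

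The first step uses the almost sure statement of Theorem~\ref{thm:non-exchangeable-known-weights}. It gives $F^\star\in\Pi[\mu_{\text{obs}},X_{n+1};w^\star_{n+1}]$ almost surely. The reason is that choosing $y'=Y_{n+1}$ in \eqref{eq:PS_up_main} yields the measure $\mu_{\text{obs}}+w^\star_{n+1}\delta_{(X_{n+1},Y_{n+1})}=\widetilde\mu$. Hence, for every $y$,
\[
\Pi_\ell[\mu_{\text{obs}}(w^\star),X_{n+1};w^\star_{n+1}](y)\le F^\star(y)\le \Pi_u[\mu_{\text{obs}}(w^\star),X_{n+1};w^\star_{n+1}](y).
\]

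The second step restricts to the event $\{w^\star\in\mathcal W^{\text{WU}}_\alpha\}$. On this event, $w^\star$ is one of the candidates in the infimum \eqref{eq:WU-env-lower} and in the supremum \eqref{eq:WU-env-upper}. Therefore $\Pi^{\text{WU},\alpha}_\ell(y)\le\Pi_\ell[\mu_{\text{obs}}(w^\star),X_{n+1};w^\star_{n+1}](y)$ and $\Pi_u[\ldots](y)\le\Pi^{\text{WU},\alpha}_u(y)$ for all $y$. Combined with the first step, $F^\star$ lies between the robust envelopes. Since $F^\star$ is a CDF, because $G$ outputs CDFs, this means $F^\star\in\Pi^{\text{WU},\alpha}$. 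The inclusion therefore holds up to a null set.

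The third step takes probabilities. This gives $\mathbb P(F^\star\in\Pi^{\text{WU},\alpha})\ge\mathbb P(w^\star\in\mathcal W^{\text{WU}}_\alpha)$. Part (a) then follows from \eqref{eq:WU_coverage}, and part (b) follows by taking $\liminf_{n\to\infty}$ of both sides.

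The main obstacle I expect is measurability. The event $\{F^\star\in\Pi^{\text{WU},\alpha}\}$ involves an infimum and supremum over the uncountable set $\mathcal W^{\text{WU}}_\alpha\times\mathbb R$, so it need not be measurable. I would first sidestep this by reading the probability as an inner probability, or by noting that the event contains the measurable event $\{w^\star\in\mathcal W^{\text{WU}}_\alpha\}$ minus a null set, so the lower bound holds regardless. If a genuinely measurable event is required, I would instead use separability: restrict the suprema to rational $y'$ and rational points of the box. This is valid when $G$ is right-continuous in $y$ and suitably semicontinuous in the weights, or when the envelopes are attained at extreme insertions as in the computability remark.

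If $w^\star$ is allowed to depend on $y'$, the same argument goes through. One defines the robust envelopes with $w$ ranging over the box and inserting $(X_{n+1},y')$, and evaluates the true-weight member at $y'=Y_{n+1}$.
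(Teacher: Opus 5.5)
Your proposal is correct and follows the same route as the paper. On the event $\{w^\star\in\mathcal W^{\text{WU}}_\alpha\}$, the true weights are a feasible candidate and the insertion $y'=Y_{n+1}$ recovers $\widetilde\mu$, so $F^\star\in\Pi^{\text{WU},\alpha}$; hence $\mathbb P(F^\star\in\Pi^{\text{WU},\alpha})\ge\mathbb P(w^\star\in\mathcal W^{\text{WU}}_\alpha)$, and part (b) follows by taking $\liminf$. You spell out this inclusion and the measurability caveat more carefully than the paper, which simply asserts the implication.
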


% \begin{proof}
%     See Appendix \ref{app:WU-proof}.
% \end{proof}

\begin{remark}[Deterministic weight-uncertainty boxes]
    Taking $\alpha=0$ yields a deterministic uncertainty set $\mathcal W^{\text{WU}}:=\mathcal W^{\text{WU}}_{\alpha=0}$. In this case, the robust envelopes are valid whenever the model restricts $w^\star\in\mathcal W^{\text{WU}}$ almost surely.
\end{remark}

\subsection{Computability}\label{sec:comp}
We now spell out how to compute the weight-robust envelopes $\Pi^{\text{WU},\alpha}_\ell,\Pi^{\text{WU},\alpha}_u$ from~\eqref{eq:WU-env-lower}--\eqref{eq:WU-env-upper} for three practically important choices of $G$; full derivations and additional technical details are given in Appendix~\ref{app:computation}. Throughout, fix $x:=X_{n+1}$ and define the per-atom lower/upper weight bounds $l_i := L_i\hat w_i, u_i := U_i\hat w_i, i=1,\dots,n+1$.

\paragraph{Conformal binning (CBIN).}
Suppose that a partition of $\{1,\dots,n+1\}$ has been determined either on an independent training set (split setting) or using covariates $x_1,\dots,x_n$, only. 
Let $I(x)\subseteq\{1,\dots,n\}$ denote the indices in the same partition element as $n+1$ (e.g.\ the bin of $x$).
Define $G$ as the weighted empirical CDF within the bin $I(x)$ augmented by the test atom with weight $w_{n+1}$:
\begin{equation}
    G\!\left[P_{\mu_{\text{obs}}(w)+w_{n+1}\delta_{(x,y')}},\,x\right](y) = \frac{\sum_{i\in I(x)} w_i\,\mathbb 1\{Y_i\le y\} + w_{n+1}\,\mathbb 1\{y'\le y\}}
        {\sum_{i\in I(x)} w_i + w_{n+1}}.
\end{equation}
This procedure is in-sample auto-calibrated \citep[Section 2.4]{allen_-sample_2025}.
The inner $\inf_{y'}$ / $\sup_{y'}$ in \eqref{eq:PS_up_main} is explicit: to maximize the CDF at threshold $y$, choose $y'\le y$ (so the test atom contributes to the numerator); to minimize it, choose $y'>y$. Optimizing over the weight-uncertainty box then reduces to pushing atoms with $Y_i\le y$ to their upper bounds and atoms with $Y_i>y$ to their lower bounds for the upper envelope, and vice versa for the lower envelope, while always taking $w_{n+1}$ at its upper bound to widen the credal set. Concretely, for any $y\in\mathbb R$,

\begin{equation}
    \label{eq:WU-cbin-upper}
    \Pi^{\text{WU},\alpha}_u(y)
    = \frac{ u_{n+1}+\sum_{i\in I(x):\,Y_i\le y} u_i}
    { u_{n+1}+\sum_{i\in I(x):\,Y_i\le y} u_i+\sum_{i\in I(x):\,Y_i>y} l_i},
\end{equation}
\begin{equation}
    \label{eq:WU-cbin-lower}
    \Pi^{\text{WU},\alpha}_\ell(y) = 
    \frac{\sum_{i\in I(x):\,Y_i\le y} l_i}
        { u_{n+1}+\sum_{i\in I(x):\,Y_i\le y} l_i+\sum_{i\in I(x):\,Y_i>y} u_i}.
\end{equation}
Hence, the envelopes are piecewise-constant with breakpoints at $\{Y_i:i\in I(x)\}$. Computationally, sort $Y_i$ on $I(x)$ once ($O(k\log k)$), where $k:=|I(x)|$, compute prefix sums of $ l_i$ and $ u_i$, and evaluate \eqref{eq:WU-cbin-upper}--\eqref{eq:WU-cbin-lower} for all breakpoints in $O(k)$.

\paragraph{Conformity-measure CPS (CMCPS).}
If $G$ is a conformity-measure CPS with a \emph{fixed} monotone score $S$ (e.g.\ split conformal), it is in-sample probabilistically calibrated \citep[Section 2.3]{allen_-sample_2025} and the same computation applies after replacing outcomes $Y_i$ by scores $s_i:=S(Z_i)$ and the threshold $y$ by $s_y:=S((x,y))$. That is, for any candidate $y$ we evaluate the envelopes in \eqref{eq:WU-cbin-upper}--\eqref{eq:WU-cbin-lower} with $Y_i$ replaced by $s_i$ and the event $\{Y_i\le y\}$ replaced by $\{s_i\le s_y\}$. Thus, once the scores are computed, evaluation is again dominated by sorting ($O(k\log k)$) plus linear-time sweeps.

\paragraph{Conformal isotonic distributional regression (CIDR).}
For conformal IDR, $G$ is defined through antitonic regression of the indicators $\mathbb 1\{Y_i\le y\}$ along a (total) order on the covariates used by IDR augmented with the test covariate at position $i^\star$ in that order. The order can be estimated on the training sample (split setting). Conformal IDR is in-sample isotonically calibrated \citep[Section 2.5]{allen_-sample_2025}. For each fixed threshold $y$, define ordered indicators $\mathbb 1\{Y'_i\le y\}$ where $(X'_i,Y'_i)$ denotes the ordered sequence used by IDR for $i=1,\dots,n+1$. Given a feasible weight vector $w$ (in that same order), the IDR predictive CDF at the test position can be written as the standard antitonic functional
\begin{equation*}
    G[P_{\mu_{\text{obs}}(w)+w_{i^\star}\delta_{(X_{i^\star}',y')}},X_{i^\star}'](y)
    := \min_{s\le i^\star}\ \max_{t\ge i^\star}
    \frac{\sum_{r=s}^t 1\{Y'_r\le y\}\,w_r}{\sum_{r=s}^t w_r},
\end{equation*}
ignoring intervals of zero total weight and implicitly setting $Y'_{i^\star} = y'$. Note that $X_{i^\star}' = X_{n+1}$. The weight-robust IDR envelopes optimize this quantity over $w\in\mathcal W^{\text{WU}}_\alpha$. A key simplification is that for each fixed $y$, the extrema are attained at \emph{box-extreme} configurations: to maximize, set $w_r= u_r$ when $1\{Y'_r\le y\}=1$ and $w_r= l_r$ when $1\{Y'_r\le y\}=0$; to minimize, swap $ u$ and $l$ (the formal statement appears in Appendix~\ref{app:wurobust-idr}). Therefore, for each $y$ we compute $\Pi^{\text{WU},\alpha}_u(y)$ and $\Pi^{\text{WU},\alpha}_\ell(y)$ by running the (IDR) antitonic regression routine twice (once at each extreme weight vector), i.e.\ two PAVA evaluations. Each PAVA run is linear in the ordered sample size ($O(n)$), so evaluating the envelopes on a grid of $m$ thresholds costs $O(mn)$ after the initial sorting/order construction; in practice, $m$ is taken as the number of distinct outcomes in the local set or a moderate evaluation grid.

% \paragraph{Summary.}
% For CBIN and conformity-score CPS, robust envelopes admit closed-form expressions with $O(k\log k)$ preprocessing and $O(k)$ evaluation at all breakpoints. For CIDR, robust envelopes reduce to two antitonic regressions per threshold, yielding $O(n)$ per threshold after the ordering is fixed.

\section{Quantifying epistemic uncertainty with conformal bands}
\label{sec:epistemic}

\begin{figure*}[!t]
    \centering
    \includegraphics[width=\textwidth]{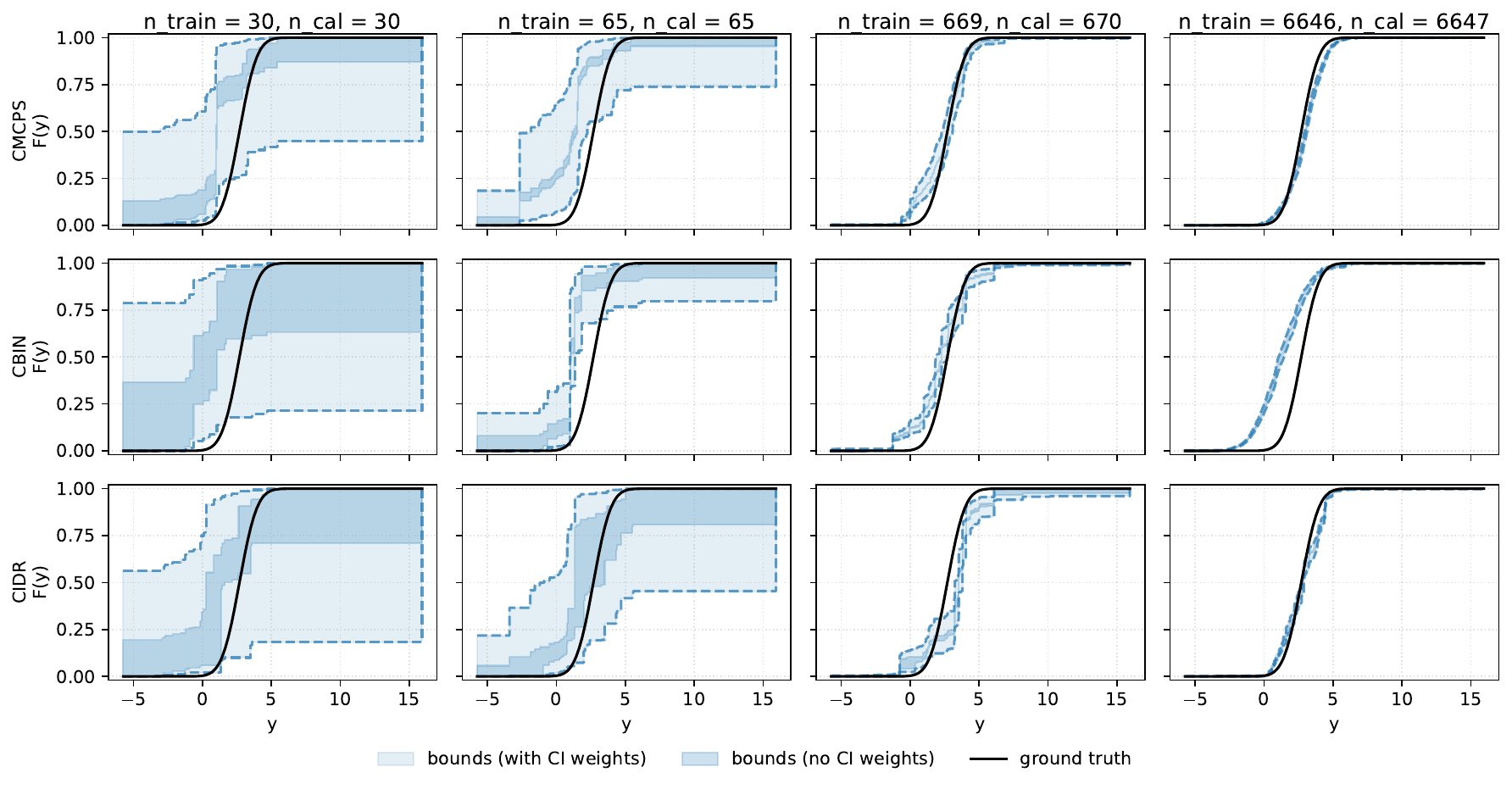}
    \caption{Effect of sample size on conformal CDF band thickness under estimated covariate shift. Using the synthetic setup described in Appendix~\ref{app:experiments}, we plot weighted conformal predictive CDF bands for a fixed target-domain test point and increasing amounts of source-domain training and calibration data (reported in the column headers $(n_{\text{train}}, n_{\text{cal}})$). Results are shown for CMCPS, CBIN, and CIDR. The lighter region uses confidence-interval-based odds weights, while the darker region uses point-estimated odds weights. As the sample size increases, the bands tighten, indicating reduced epistemic uncertainty. The black curve denotes the oracle conditional CDF.}
    \label{fig:cdf-band-example}
\end{figure*}

A GCPS outputs a \emph{conformal CDF band} $\Pi^{\text{WU},\alpha}(X_{n+1})$ that is guaranteed to contain at least one calibrated predictive CDF (Section~\ref{sec:wgcps}). The band \emph{thickness}
\begin{equation}
    \label{eq:band_thickness}
    \Delta(y\mid X_{n+1}) := \Pi^{\text{WU},\alpha}_{u}(y, X_{n+1}) - \Pi^{\text{WU},\alpha}_\ell(y, X_{n+1})
\end{equation}
provides an explicit measure of \emph{epistemic} uncertainty: a thin band indicates that calibration essentially pins down a unique forecast, while a thick band indicates ambiguity about which predictive distribution is justified by the available data and assumptions \citep{allen_-sample_2025}. In our shift-aware setting, $\Delta$ reflects both finite-sample uncertainty in learning $Y_{n+1}\mid X_{n+1}$ and additional uncertainty from estimating the shift weights, which we propagate via the weight-uncertainty box $\mathcal W^{\text{WU}}_\alpha$ (Section~\ref{sec:weight-robust}).

Figure~\ref{fig:cdf-band-example} illustrates this effect under estimated covariate shift. For a fixed test point and three instantiations of $G$ (CMCPS, CBIN, CIDR), the bands tighten as $(n_{\text{train}},n_{\text{cal}})$ increase, indicating reduced epistemic uncertainty from both model fitting and calibration. The lighter CI-weight bands are consistently wider than the darker point-weight bands, isolating the contribution of shift-estimation uncertainty, which also contracts with sample size. Complementarily, Figure~\ref{fig:cdf-band-weight-example} fixes the sample size and increases the covariate-shift weights (moving the test point further away from the training distribution); the resulting widening shows how epistemic uncertainty grows when test instances are weakly supported by the observed data. The simulation setup is documented in Appendix \ref{app:experiments}.

For deployment, $\Delta(\cdot\mid X_{n+1})$ can be summarized into a small number of levels (e.g.\ low/medium/high) as an uncertainty indicator, while still reporting a single representative forecast (e.g.\ the CRPS-optimal crisp CDF) for routine use. In higher-stakes settings, it can be preferable to report band-implied bounds on decision-relevant quantities such as tail risks or threshold probabilities $\mathbb P(Y_{n+1}\ge \tau\mid X_{n+1})$, enabling decisions that are robust to epistemic uncertainty.

\section{Synthetic experiment}
\label{sec:syn-exp}

We study covariate shift in regression with $p=5$ covariates, where the shift acts only through the marginal $\mathcal P_X$ (Appendix~\ref{app:experiments} gives full details). We evaluate unweighted predictive systems (CMCPS/CBIN/CIDR) against shift-aware weighted variants. We consider three weighting regimes: oracle weights (true $w_i^\star$), plug-in odds weights (estimated $\hat w_i$), and CI-odds weights that propagate uncertainty in $\hat w_i$ to produce weight bands.

For all three predictive systems, we adopt a split setup. We first fit a linear regression model on an independent training set, and then hold this model fixed when constructing predictive systems on the calibration set. For CMCPS, we use the linear regression residual as the conformity score (split residual-based CPS). For CIDR, we use the linear regression fitted value $\hat y(x)$ as a one-dimensional summary (single-index model), and apply IDR with the usual order on $\hat y(x)$ using only the calibration sample. For CBIN, we construct bins by clustering the calibration covariates via $k$-nearest-neighbor structure, yielding $10$ clusters, and then apply weighted empirical-CDF binning within the assigned cluster.

\paragraph{Results.}
Each method outputs, for every target test point $x$, a conformal predictive CDF band $\Pi[\mu_{\text{obs}},x;w_{n+1}]$ together with a \textit{crisp} predictive CDF $\widehat{F}(\cdot\mid x)$ obtained by minimizing the worst-case CRPS over the band \citep{allen_-sample_2025}, i.e.,
\begin{equation*}
    \widehat{F}(y \mid x) = \Pi_u[\mu_{\text{obs}},x;w_{n+1}](y)
    - \frac{1}{2}\Pi_u[\mu_{\text{obs}},x;w_{n+1}](y)^2
    + \frac{1}{2}\Pi_\ell[\mu_{\text{obs}},x;w_{n+1}](y)^2 .
\end{equation*}
We evaluate CRPS (lower is better) and the dispersion of PIT values, which equals $1/12\approx 0.0833$ under probabilistic calibration, for the crisp predictive CDFs. In addition, we report marginal coverage of $90\%$ prediction intervals (PIs) constructed by inverting the conformal band: for $\alpha=0.1$,
\begin{equation*}
        I_{\alpha}(x)=\Big[\inf\{y:\Pi_u[\mu_{\text{obs}},x;w_{n+1}](y)\ge \alpha/2\}, 
        \inf\{y:\Pi_\ell[\mu_{\text{obs}},x;w_{n+1}](y)\ge 1-\alpha/2\}\Big],
\end{equation*}
which yields a conservative PI that is valid for any CDF consistent with $\Pi[\mu_{\text{obs}},x;w_{n+1}]$.

Table~\ref{tab:syn-exp-results} summarizes performance on the target domain under covariate shift for three GCPS (CMCPS, CBIN, CIDR) and four weighting schemes (none, oracle weights, estimated weights, and CI-based weights that propagate estimation uncertainty). Across all CPSs, the unweighted methods under-cover (coverage $\approx 0.83$-$0.88$), confirming that ignoring covariate shift leads to miscalibrated uncertainty on the target distribution. In contrast, weighting restores validity: oracle and estimated weighting bring coverage close to (and slightly above) the nominal 0.90 level, while CI-based weighting is most conservative (coverage up to $\approx 0.97$). This increased coverage comes with the expected trade-off in sharpness, reflected in slightly higher CRPS for the CI-based variants and larger predictive bands (cf.\ the more conservative PIT dispersions). At the same time, CRPS evaluation based on the CRPS-optimal \emph{crisp} CDF should be interpreted with care, especially for the CI-based envelopes, because collapsing the conformal band to a single distribution discards the epistemic uncertainty encoded by the band width. Thus, while the crisp forecast is convenient for summary evaluation, it provides only a partial view of predictive performance. Future work should therefore develop scoring and evaluation procedures that explicitly account for this epistemic uncertainty, for example, by assessing performance as increasingly epistemically uncertain test cases are excluded, analogous to selective evaluation curves used in classification \citep{geifman_selective_2017}. We therefore advise reporting crisp forecasts based on the estimated (non-CI) weights, and use the CI-based bands primarily to quantify epistemic uncertainty. Overall, these results demonstrate that covariate-shift weighting can be layered on conformal distributional calibrators to recover calibrated predictive uncertainty under covariate shift.

\begin{figure*}[!ht]
    \centering
    \includegraphics[width=\textwidth]{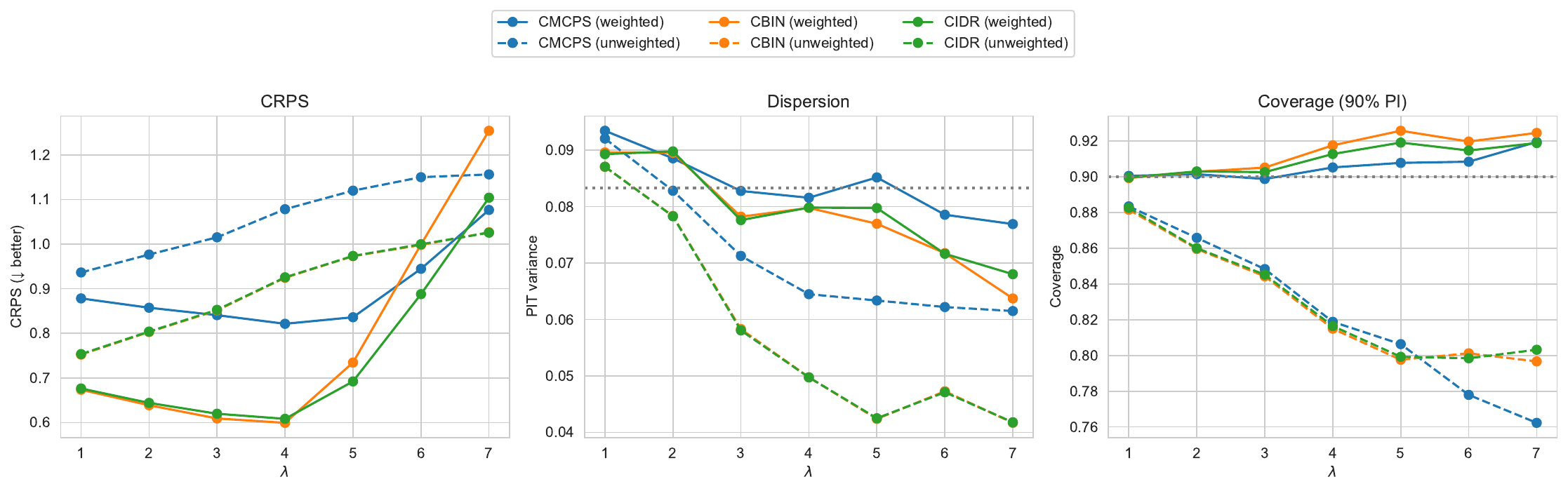}
    \caption{Comparison of three conformal predictive systems (CMCPS, CBIN, CIDR) and their feedback-covariate-shift-aware weighted variants (solid lines) versus unweighted variants (dashed lines) as a function of the design inverse temperature~$\lambda$ in the AAV capsid packaging experiment. The panels show the mean CRPS (left), PIT-based dispersion (middle), and empirical coverage (right) over $T=500$ trials, computed under the design distribution $\mathcal P_{test,\lambda}$. The horizontal line in the coverage panel indicates the nominal target coverage level of $0.9$.}
    \label{fig:bio-design-results}
\end{figure*}

\begin{table}[t]
\centering
\small
\caption{Synthetic 5D covariate-shift regression: CRPS, PIT dispersion, and 90\% prediction-interval coverage on the target domain for CMCPS, CBIN, and CIDR under different weighting schemes (none, oracle, estimated, and CI-based). Point estimates are means over Monte-Carlo replicates; bracketed values are the corresponding $[q_{0.1},\,q_{0.9}]$ quantile intervals.}
\label{tab:syn-exp-results}
\begin{adjustbox}{max width=\textwidth,center}
\begin{tabular}{@{}llccc@{}}
\toprule
GCPS & Weighting scheme & CRPS & Dispersion & Coverage \\
\midrule
\multirow{4}{*}{CMCPS} & No weighting & 1.149 {\scriptsize [1.081,\,1.218]} & 0.0961 {\scriptsize [0.0890,\,0.1021]} & 0.833 {\scriptsize [0.800,\,0.870]} \\
 & Oracle weighting & 1.142 {\scriptsize [1.083,\,1.212]} & 0.0821 {\scriptsize [0.0720,\,0.0915]} & 0.907 {\scriptsize [0.864,\,0.946]} \\
 & Estimated weighting & 1.147 {\scriptsize [1.088,\,1.206]} & 0.0796 {\scriptsize [0.0668,\,0.0901]} & 0.913 {\scriptsize [0.866,\,0.954]} \\
 & Estimated CI weighting & 1.169 {\scriptsize [1.111,\,1.228]} & 0.0616 {\scriptsize [0.0511,\,0.0726]} & 0.953 {\scriptsize [0.928,\,0.984]} \\
\midrule
\multirow{4}{*}{CBIN} & No weighting & 1.113 {\scriptsize [1.048,\,1.167]} & 0.0886 {\scriptsize [0.0813,\,0.0954]} & 0.878 {\scriptsize [0.850,\,0.910]} \\
 & Oracle weighting & 1.093 {\scriptsize [1.034,\,1.150]} & 0.0783 {\scriptsize [0.0697,\,0.0862]} & 0.934 {\scriptsize [0.908,\,0.958]} \\
 & Estimated weighting & 1.096 {\scriptsize [1.038,\,1.153]} & 0.0769 {\scriptsize [0.0677,\,0.0844]} & 0.933 {\scriptsize [0.906,\,0.958]} \\
 & Estimated CI weighting & 1.150 {\scriptsize [1.098,\,1.207]} & 0.0579 {\scriptsize [0.0504,\,0.0654]} & 0.963 {\scriptsize [0.946,\,0.978]} \\
\midrule
\multirow{4}{*}{CIDR} & No weighting & 1.262 {\scriptsize [1.166,\,1.374]} & 0.0900 {\scriptsize [0.0809,\,0.0971]} & 0.860 {\scriptsize [0.826,\,0.900]} \\
 & Oracle weighting & 1.216 {\scriptsize [1.139,\,1.323]} & 0.0750 {\scriptsize [0.0665,\,0.0827]} & 0.939 {\scriptsize [0.915,\,0.966]} \\
 & Estimated weighting & 1.215 {\scriptsize [1.131,\,1.314]} & 0.0743 {\scriptsize [0.0641,\,0.0841]} & 0.937 {\scriptsize [0.910,\,0.964]} \\
 & Estimated CI weighting & 1.262 {\scriptsize [1.178,\,1.372]} & 0.0555 {\scriptsize [0.0467,\,0.0641]} & 0.971 {\scriptsize [0.954,\,0.986]} \\
\bottomrule
\end{tabular}
\end{adjustbox}
\end{table}

\section{Use case: feedback covariate shift in ML-driven biomolecular design}
\label{sec:feedback-use-case}
Adeno-associated viruses (AAVs) are widely used as delivery vectors for gene therapy, and the capsid is the protein shell that encapsulates the viral genome and mediates delivery to target cells. A standard strategy is therefore to engineer large libraries of AAV capsid variants and then select desirable properties such as packaging, infectivity, or cell-type specificity. A major practical bottleneck, however, is that many variants in standard starting libraries already fail at the packaging stage, meaning that they do not successfully assemble into viable capsids carrying the genetic payload. Since packaging is the minimum requirement for any downstream gene-delivery objective, improving packaging fitness while retaining sufficient sequence diversity is a natural and practically important design problem.

We follow the AAV capsid packaging use case of \citet{fannjiang_conformal_2022}, which in turn adopts the library-design methodology of \citet{zhu_optimal_2024}. Concretely, the underlying data come from an AAV5 7-mer peptide-insertion library built from the standard NNK sequence distribution, where each unique sequence is assigned a packaging enrichment score computed from counts before and after a packaging-based selection experiment. \citet{fannjiang_conformal_2022} use these data to study feedback covariate shift (FCS) in an abundant-data regime: they hold out $10^6$ labeled sequences for calibration and test purposes, train a neural network predictor $\hat{\mu}(\cdot)$ on the remaining $7,552,729$ sequences, and then construct designed test distributions $\mathcal P_{\text{test},\lambda}$ indexed by an inverse-temperature parameter $\lambda \in \{1, \ldots, 7\}$. Ideally, these test distributions are exponential tilts of the baseline NNK training distribution $p_{\text{NNK}}$, with mass function $p_{\lambda}(x) \propto p_{\text{NNK}}(x)\exp(\lambda \hat{\mu}(x))$.

So larger values of $\lambda$ increasingly concentrate mass on sequences with high predicted packaging fitness. In practice, following \citet{zhu_optimal_2024}, $\mathcal P_{\text{test},\lambda}$ is implemented by a tractable family of position-wise independent sequence distributions $p_{\phi_\lambda}$, parameterized as independent categorical distributions over the four nucleotides at each of the 21 contiguous sequence positions, corresponding to seven codons of insertion. The resulting covariate shift relative to the baseline NNK distribution is then encoded through the likelihood-ratio weights $w_\lambda(x) \propto \frac{d\mathcal P_{\text{test},\lambda}}{d\mathcal P_{\text{train}}}(x) \approx \frac{p_{\phi_\lambda}(x)}{p_{\text{NNK}}(x)}$.
Thus, increasing $\lambda$ produces designed sequences with higher predicted packaging fitness and stronger shift away from the baseline NNK training distribution $\mathcal P_{\text{train}}$.

In our implementation, we follow this protocol directly. For each $\lambda$, we randomly split the $10^6$ held-out labeled sequences into $990,000$ proposal points and $n_{\text{cal}} = 10,000$ calibration points, sample designed test sequences by rejection sampling from the proposal set, and retain up to $100$ accepted test sequences in each of $T = 500$ trials. We treat the pretrained AAV predictor as fixed by using its sequence-level predictions as inputs to our conformal procedures, and compare weighted and unweighted versions of CMCPS, CBIN, and CIDR on identical calibration/test splits. For CMCPS, we use the residual as conformity score. For CBIN, we construct bins from the calibration-set predicted values using $k=5$ quantile bins, and assign both calibration and test sequences to bins according to these calibration-based quantile edges.

\paragraph{Results.}
For each test sequence, every method returns a conformal CDF band $\Pi[\mu_{\text{obs}},X_{n+1};w_{n+1}]$ and a representative \emph{crisp} CDF $\widehat F(\cdot\mid X_{n+1})$ (as in the synthetic experiment). Figure~\ref{fig:bio-design-results} shows that importance weighting improves performance across the range of design shifts $\lambda$, restoring near-nominal (often slightly conservative) coverage under $p_{\text{test},\lambda}$ compared to the under-covering unweighted variants. At the most extreme shift ($\lambda=7$), weighted CBIN and CIDR can incur slightly worse CRPS, consistent with severe effective-sample size collapse and markedly wider bands, i.e.\ increased epistemic uncertainty. Finally, Figures~\ref{fig:aav_reldiag_lambda7}--\ref{fig:aav_reldiag_lambda1} report threshold-wise reliability for extreme events $\{Y\ge \tau\}$ (high empirical quantiles), confirming that the weighted procedures retain good calibration for rare high-packaging outcomes. Overall, the AAV study demonstrates that FCS-aware weighting can be combined with conformal distributional calibrators to obtain well-calibrated predictive distributions under model-induced (feedback) shifts.

\section{Conclusion}
\label{sec:conclusion}
We extended GCPS beyond exchangeability using permutation weights and introduced weight-uncertainty boxes for robust CDF envelopes with finite-sample or asymptotic confidence guarantees. Efficient rules for CMCPS, CBIN, and CIDR make the method practical, and experiments show calibrated bands that widen under stronger shifts and tighten with more data.

\paragraph{Limitations.} Two caveats deserve emphasis. First,
Theorem~\ref{thm:WU-robust-alpha} requires the weight-uncertainty box to be a \emph{joint} confidence region for $w^\star$; in our experiments this is constructed from a fitted propensity model and the resulting coverage is asymptotic. Second, the closed-form envelopes for CBIN/CMCPS (Proposition~\ref{prop:wurobust-cbin-cmcps}) and CIDR (Proposition~\ref{prop:WU-extrema-antitonic}) assume the active set / IDR order is fixed across $\mathcal W^{\text{WU}}_\alpha$, which restricts the binning, score, and ordering choices to those defined prior to weighting.

\paragraph{Code availability.}
The code accompanying this work will be made publicly available upon publication.

\begin{ack}
    Jef Jonkers is funded by the Research Foundation Flanders (FWO, Ref. 1S11525N).
\end{ack}

% \section*{References}

\bibliographystyle{unsrtnat}
\bibliography{references}

%%%%%%%%%%%%%%%%%%%%%%%%%%%%%%%%%%%%%%%%%%%%%%%%%%%%%%%%%%%%
\newpage
\appendix

\section*{Appendix overview}
This appendix is organized as follows. Appendix \ref{app:related-work} briefly presents some of the related literature. Appendix~\ref{app:permweights} formalizes the permutation-weight representation used to model non-exchangeability. Appendix~\ref{app:proofs} contains proofs of the main theorems and supporting propositions. Appendix~\ref{app:wu-details} collects additional technical details for weight-uncertainty boxes. Appendix~\ref{app:computation} provides full computational derivations for CBIN, CMCPS, and CIDR under weight uncertainty. Appendix~\ref{app:experiments} contains experimental details (synthetic DGP and additional figures).

\section{Related work}
\label{app:related-work}

Generalized conformal predictive systems (GCPS) \citep{allen_-sample_2025}, like classical conformal prediction, obtain out-of-sample calibration guarantees under exchangeability of the sequence $(X_1,Y_1),\ldots,(X_{n+1},Y_{n+1})$. Since exchangeability is frequently violated under distribution shift, substantial effort has gone into extending conformal prediction to non-exchangeable regimes, typically focusing on calibrated prediction sets or intervals rather than full predictive distributions.

A prominent line of work is \emph{weighted conformal prediction} (WCP), which uses importance weights or likelihood-ratio weights to restore coverage under covariate shift \citep{tibshirani_conformal_2019}. More broadly, \citet{barber_conformal_2023} develop conformal methods for certain non-exchangeable settings, while \emph{localized} or \emph{conditional} conformal approaches adapt calibration to local neighborhoods in covariate space \citep{guan_localized_2023,hore_conformal_2024}. Relatedly, \citet{prinster_conformal_2024} study generalized weighted conformal prediction, providing a flexible framework for weighting-based validity beyond classical exchangeability. Finally, \citet{barber_unifying_2025} give a unifying perspective that subsumes many such extensions within a common conformal framework.

Closer to our setting, recent work has started extending conformal \emph{predictive systems} beyond exchangeability in specific shift regimes. In particular, \citet{jonkers_conformal_2024, jonkers_conformal_2025} develop weighted CPS for \emph{known covariate shift} using likelihood-ratio weights, focusing on conformity-measure CPS and probabilistic calibration.

\section{Permutation weights from non-exchangeable sequences}
\label{app:permweights}

Throughout, let $Z_{1:n+1}=(Z_1,\dots,Z_{n+1})$ with $Z_i=(X_i,Y_i)\in\mathcal Z=\mathcal X\times\mathbb R$. Assume $\mathcal Z$ is a standard Borel space. For a realization $z_{1:n+1}\in\mathcal Z^{n+1}$, define
\begin{equation*}
    E_{z_{1:n+1}} := \{ z_1,\dots,z_{n+1}\}
\end{equation*}
assuming that the elements of $Z_{1:n+1}$ are pairwise distinct almost surely. Extensions to possibly repeated observations require more notation but are straightforward. On standard Borel spaces, a regular conditional distribution given the $\sigma$-field generated by $E_{Z_{1:n+1}}$ exists.

Assume that the sequence admits a joint density $f$ on $\mathcal Z^{n+1}$ with respect to a product base measure. Conditional on the unordered multiset, the only remaining randomness is the ordering of the realized atoms. Hence, for a permutation $\sigma\in\mathfrak S_{n+1}$, define
\begin{equation}
    \label{eq:q_sigma_app}
    q_\sigma(z_{1:n+1})
    :=
    \frac{
        f(z_{\sigma(1)},\dots,z_{\sigma(n+1)})
    }{
        \sum_{\tau\in\mathfrak S_{n+1}}
        f(z_{\tau(1)},\dots,z_{\tau(n+1)})
    },
\end{equation}
whenever the denominator is positive. The probability that the atom $z_i$ occupies the test position is obtained by summing over all permutations that place $z_i$ in the last position:
\begin{equation*}
    p_i(z_{1:n+1})
    :=
    \sum_{\sigma\in\mathfrak S_{n+1}:\,\sigma(n+1)=i}
    q_\sigma(z_{1:n+1}),
    \qquad i=1,\dots,n+1 .
\end{equation*}
Thus $p_i(z_{1:n+1})\ge 0$, $\sum_{i=1}^{n+1}p_i(z_{1:n+1})=1$, and
\begin{equation*}
    \mathbb P\{Z_{n+1}=z_i \mid E_{z_{1:n+1}}\}
    =
    p_i(z_{1:n+1}).
\end{equation*}
Consequently, conditional on the unordered multiset, $Z_{n+1}$ is a categorical draw from the realized atoms $\{z_1,\dots,z_{n+1}\}$ with probability weights $(p_1,\dots,p_{n+1})$.

A particularly important special case is covariate shift. Suppose $Z_1,\dots,Z_n$ are drawn from a source/calibration law $\mathcal P_0$, while $Z_{n+1}$ is drawn from a target/test law $\mathcal P_1$, with $\mathcal P_1\ll \mathcal P_0$. If the non-test positions are exchangeable under $\mathcal P_0$, then the likelihood of a permutation whose last atom is $z_i$ is proportional to the likelihood ratio
\begin{equation*}
    r(z_i):=\frac{d\mathcal P_1}{d\mathcal P_0}(z_i).
\end{equation*}
Therefore, the permutation probabilities reduce to
\begin{equation}
    \label{eq:covariate-shift-pi-app}
    p_i(z_{1:n+1})
    =
    \frac{r(z_i)}{\sum_{j=1}^{n+1} r(z_j)} .
\end{equation}
Under covariate shift, the conditional response law is shared, $\mathcal P_1(Y\in\cdot\mid X)=\mathcal P_0(Y\in\cdot\mid X)$, so the joint likelihood ratio depends only on the covariate:
\begin{equation*}
    r(z)=r(x,y)
    =
    \frac{d\mathcal P_{1,X}}{d\mathcal P_{0,X}}(x).
\end{equation*}
Hence the test weight is computable from $X_{n+1}$ alone, without observing $Y_{n+1}$.

When this covariate density ratio is unknown, it can be estimated with a domain classifier. Construct an auxiliary classification problem by pooling source and target covariates and assigning the label $D=0$ to source examples and $D=1$ to target examples. Let $\eta(x):=\mathbb P(D=1\mid X=x)$ denote the classifier's posterior probability. By Bayes' rule, the classifier odds are proportional to the covariate density ratio:
\begin{equation*}
    \frac{\eta(x)}{1-\eta(x)}
    \propto
    \frac{d\mathcal P_{1,X}}{d\mathcal P_{0,X}}(x).
\end{equation*}
The proportionality constant is the ratio of the class priors in the auxiliary classification problem and is independent of $x$. Since the permutation weights are normalized across the atoms, this constant cancels. Thus classifier odds $\eta(x)/(1-\eta(x))$ may be used directly as unnormalized atom weights.

Finally, although the main text presents the permutation weights as probability weights $w_i = p_i$, the GCPS construction depends only on the corresponding weighted empirical law after normalization. Hence, it is sufficient to know the unnormalized atom weights $a_i$ satisfying
\begin{equation*}
    a_i=\lambda p_i,
    \qquad i=1,\dots,n+1,
\end{equation*}
for some common scale factor $\lambda>0$. Indeed,
\begin{equation*}
    P_{\sum_{i=1}^{n+1}a_i\delta_{z_i}}
    =
    P_{\sum_{i=1}^{n+1}p_i\delta_{z_i}},
\end{equation*}
and, for any candidate insertion value $y'$,
\begin{equation*}
    P_{\sum_{i=1}^{n}a_i\delta_{z_i}
      +a_{n+1}\delta_{(x,y')}}
    =
    P_{\sum_{i=1}^{n}p_i\delta_{z_i}
      +p_{n+1}\delta_{(x,y')}} .
\end{equation*}
Thus, likelihood ratios, density ratios, classifier odds, or other importance weights may be used directly without first normalizing them, provided the same scale is used for all $n+1$ atoms.

If the sequence is exchangeable, then $f(z_{\sigma(1)},\dots,z_{\sigma(n+1)})$ is the same for all $\sigma\in\mathfrak S_{n+1}$. Hence $q_\sigma=1/(n+1)!$ and $p_i(z_{1:n+1})=1/(n+1)$ for all $i$, recovering the usual uniform conditional law of the test atom.

\section{Proofs}
\label{app:proofs}

\subsection{Proof of Theorem~\ref{thm:non-exchangeable-known-weights}:  GCPS under known distribution shift}
\label{sec:wcps-proof}

\paragraph{Relation to Theorem~A.1 of \citet{allen_-sample_2025}.}
Our proof follows the same overall strategy as the exchangeable result of \citet{allen_-sample_2025}, but the argument establishing the law of the test point is different. In \citet{allen_-sample_2025}, exchangeability together with deterministic weights implies that, conditional on the augmented empirical measure, the test point is a weighted draw from the observed atoms. In our setting, exchangeability is replaced by the permutation-weight representation from Appendix~\ref{app:permweights}: conditional on the unordered multiset of realized atoms, the test point $Z_{n+1}$ has law $P_{\widetilde\mu}$. Once this is established, we apply the relevant in-sample calibration property of $G$ under the finite-support law $P_{\widetilde\mu}$.

\begin{proof}
    Containment $F^\star\in \Pi[X_{n+1}]$ is immediate from the definition by taking the candidate insertion value $y'=Y_{n+1}$, for which 
    \begin{equation*}
        \mu_{\text{obs}}^{Y_{n+1}} + w^\star_{n+1}(Y_{n+1})\delta_{(X_{n+1},Y_{n+1})} = \widetilde\mu.
    \end{equation*}
    \\  
    \textbf{Step A (item (i)).}
     Fix $\alpha\in(0,1)$. By \eqref{eq:weight-measure} and the tower property,
    \begin{align*}
        \mathbb{P}\big(F^\star(Y_{n+1})\le \alpha\big)
        &= \mathbb{E}\Big[\mathbb{P}\big(G[P_{\widetilde\mu},X_{n+1}](Y_{n+1})\le \alpha
                                  \,\big|\, E_{Z_{1:n+1}}\big)\Big] \\
        &= \mathbb{E}\Big[P_{\widetilde\mu}\big(G[P_{\widetilde\mu},X](Y)\le \alpha\big)\Big] \\
        &\le \alpha,
    \end{align*}
    where the second equality uses that, conditional on $E_{Z_{1:n+1}}$, $(X_{n+1},Y_{n+1})\sim P_{\widetilde\mu}$, and the inequality is in-sample probabilistic calibration of $G$ applied to the finite-support law $P_{\widetilde\mu}$. The same argument with the event $\{G[P_{\widetilde\mu},X](Y-)<\alpha\}$ yields $\alpha \le \mathbb{P}\big(F^\star(Y_{n+1}-)<\alpha\big)$.

    \textbf{Step B (item (ii): isotonic calibration).} For each $y \in \mathbb R$, we need to show that,
    \begin{equation*}
        \mathbb E \Big[ \mathbb 1 \{Y_{n+1} \leq y \} \mid \mathcal{A}(G[P_{\widetilde \mu}, X_{n+1}]) \Big] = G[P_{\widetilde \mu}, X_{n+1}](y).
    \end{equation*}
    By the definition of conditional expectation with respect to a $\sigma$-lattice, see for example \citet[Definition~2.1]{arnold_isotonic_2025}, it suffices to show that for any $A \in \mathcal{A}(G[P_{\widetilde \mu}, X_{n+1}])$ and $B \in \sigma(G[P_{\widetilde \mu}, X_{n+1}])$, it holds that
    \begin{align}
        \mathbb E\!\left[\mathbb 1\{Y_{n+1}\le y\} \mathbb 1_A \right]
        &\le
        \mathbb E\!\left[G[P_{\widetilde \mu}, X_{n+1}](y)\,\mathbb 1_A\right], \label{eq:iso-upper-proof}\\
        \mathbb E\!\left[\mathbb 1\{Y_{n+1}\le y\}\mathbb 1_B\right]
        &=
        \mathbb E\!\left[G[P_{\widetilde \mu}, X_{n+1}](y)\,\mathbb 1_B\right]. \label{eq:iso-borel-proof}
    \end{align}
    Since $Z_{n+1}\mid P_{\widetilde\mu}\sim P_{\widetilde\mu}$ and $G$ is in-sample isotonically calibrated for any finite-support law, applied here to $P_{\widetilde\mu}$, we have that
     \begin{align*}
        \mathbb E\!\left[\mathbb 1\{Y_{n+1}\le y\} \mathbb 1_A \mid P_{\widetilde \mu} \right]
        &\le
        \mathbb E\!\left[G[P_{\widetilde \mu}, X_{n+1}](y)\,\mathbb 1_A \mid P_{\widetilde \mu} \right],\\
        \mathbb E\!\left[\mathbb 1\{Y_{n+1}\le y\}\mathbb 1_B \mid P_{\widetilde \mu}\right]
        &=
        \mathbb E\!\left[G[P_{\widetilde \mu}, X_{n+1}](y)\,\mathbb 1_B \mid P_{\widetilde \mu} \right]. 
    \end{align*}
    Taking expectations yields \eqref{eq:iso-upper-proof} and \eqref{eq:iso-borel-proof}.

   \textbf{Step C (item (iii): auto-calibration).}
   Conditional on $P_{\tilde{\mu}}$ it holds that $(X_{n+1},Y_{n+1})\sim P_{\widetilde\mu}$. Since $G$ is in-sample auto-calibrated under every finite-support law, applied here to $P_{\widetilde\mu}$, we have
 \begin{equation*}
        \mathbb E\!\left[\mathbb 1\{Y_{n+1}\le y\}\mid F^\star,P_{\widetilde\mu}\right]= \mathbb E\!\left[\mathbb 1\{Y_{n+1}\le y\}\mid G[P_{\widetilde\mu},X_{n+1}],P_{\widetilde\mu}\right]
       = G[P_{\widetilde\mu},X_{n+1}](y) = F^\star(y),
   \end{equation*}
   almost surely for all $y$. Taking the conditional expectation with respect to $G[P_{\widetilde\mu},X_{n+1}]$ yields the claim. 
\end{proof}

\subsection{Proof of Theorem~\ref{thm:WU-robust-alpha}}
\label{app:WU-proof}

\begin{proof} \qquad \\
    \medskip
    \noindent\textbf{Part (a).} The event $A_\alpha := \{w^* \in \mathcal{W}_\alpha^{WU}\}$ implies the event inside the probability operator of ~\eqref{eq:WU-envelope-fs}, and $\mathbb{P}(A_\alpha)\ge 1-\alpha$ by~\eqref{eq:WU-CI-fs-app}, hence
    \begin{equation}
        \label{eq:WU-envelope-fs}
        \mathbb{P}\Big(
            F^\star \in \Pi^{\text{WU},\alpha}
        \Big)
        \;\ge\; \mathbb{P}(A_\alpha)
        \;\ge\; 1-\alpha.
    \end{equation}
    \medskip
    \noindent\textbf{Part (b).} is identical, replacing~\eqref{eq:WU-CI-fs-app} by~\eqref{eq:WU-CI-as-app} and taking the $\liminf_{n\to\infty}$.
\end{proof}

\section{Additional technical details for weight-uncertainty boxes}
\label{app:wu-details}

\subsection{(Uncertain) bounds on weights and feasible laws}
\label{app:wubox_feasible}
Let $\hat w_i=\hat w_i(Z_{1:n+1})\ge 0$ be estimated (possibly misspecified) weights attached to the atoms $Z_i$. Let $L_i,U_i$ be given with $0\le L_i\le U_i<\infty$. The true (unknown) unnormalized weights $w^\star=(w^\star_1,\dots,w^\star_{n+1})$ are assumed to satisfy the coordinatewise multiplicative bounds
\begin{equation}
    \label{eq:WU-unnorm-app}
    L_i\,\hat w_i \;\le\; w^\star_i \;\le\; U_i\,\hat w_i, \qquad i=1,\dots,n+1,
\end{equation}
with confidence level $1-\alpha$.

This defines the weight-uncertainty box (same as~\eqref{eq:W_WU})
\begin{equation*}
    \mathcal{W}^{\text{WU}}_\alpha
    := \Big\{\,w\in \mathbb{R}_+^{n+1}:\ L_i\hat w_i \le w_i \le U_i\hat w_i\ \text{for all }i\, \Big\},
\end{equation*}
and the induced feasible probability-weight set
\begin{equation}
    \label{eq:P_WU_app}
    \mathcal{P}^{\text{WU}}_\alpha
    := \Big\{\,p^w \in [0,1]^{n+1}:\ p^w_i=\frac{w_i}{\sum_{j=1}^{n+1} w_j},\ w\in\mathcal W^{\text{WU}}_\alpha \Big\}.
\end{equation}

\subsection{Coverage regimes}
\label{app:wubox_coverage}
We distinguish two regimes for interpreting $\mathcal{W}^{\text{WU}}_\alpha$ as a confidence region for $w^\star$:

\begin{itemize}
    \item \textbf{Finite-sample coverage.}
    \begin{equation}
        \label{eq:WU-CI-fs-app}
        \mathbb{P}\big(w^\star \in \mathcal{W}^{\text{WU}}_\alpha\big) \;\ge\; 1-\alpha
    \end{equation}
    for each fixed sample size $n$.
    
    \item \textbf{Asymptotic coverage.}
    \begin{equation}
        \label{eq:WU-CI-as-app}
        \liminf_{n\to\infty} \mathbb{P}\big(w^\star \in \mathcal{W}^{\text{WU}}_\alpha\big) \;\ge\; 1-\alpha .
    \end{equation}
\end{itemize}

In both regimes, $\mathcal{W}^{\text{WU}}_\alpha$ provides simultaneous coverage for the entire vector $w^\star$: the event $\{w^\star \in \mathcal{W}^{\text{WU}}_\alpha\}$ means that all coordinates lie within their respective bounds simultaneously. The hyper-rectangular form $L_i\hat w_i \le w_i \le U_i\hat w_i$ is chosen for analytical convenience; the method used to obtain $(L_i,U_i)$ (e.g.\ inversion of tests, pivotal arguments, or resampling) is immaterial for our results as long as~\eqref{eq:WU-CI-fs-app} or~\eqref{eq:WU-CI-as-app} holds.

\section{Computation under weight-uncertainty boxes}
\label{app:computation}
This appendix derives efficient evaluation of the weight-robust envelopes $\Pi^{\text{WU}}_\ell,\Pi^{\text{WU}}_u$ for CBIN and CMCPS (CIDR is treated separately in Appendix~\ref{app:wurobust-idr}).

\subsection{CBIN and CMCPS: closed forms from box-extrema}
\label{app:wurobust-binning}

\paragraph{Setup and assumptions (CBIN/CMCPS).}
Fix the test covariate $x:=X_{n+1}$ and write $l_i:=L_i\hat w_i$ and $u_i:=U_i\hat w_i$ for $i=1,\dots,n+1$. Let $I(x)\subseteq\{1,\dots,n\}$ denote the set of calibration indices actually used to form the prediction at $x$. Depending on the method, $I(x)$ may depend on $x$ and on any quantities fixed prior to the optimization over $w\in\mathcal W^{\text{WU}}$ (for example, a split-trained model, a precomputed partition, a neighborhood structure, or a fixed score map). For the proposition below, we assume:
\begin{enumerate}
    \item[\textbf{A1}] The active set $I(x)$ is the same for all $w\in\mathcal W^{\text{WU}}$, i.e., for CMCPS, the score map $S$ used at $x$ is also fixed over $\mathcal W^{\text{WU}}$.
    \item[\textbf{A2}] Denominator positivity: for all $w\in\mathcal W^{\text{WU}}$,
    $w_{n+1}+\sum_{i\in I(x)}w_i>0$.
\end{enumerate}

\begin{proposition}[Box-extrema yield closed-form robust envelopes for CBIN/CMCPS]
    \label{prop:wurobust-cbin-cmcps}
    Define, for $y\in\mathbb R$,
    \begin{equation*}
        I_1(y):=\{i\in I(x):Y_i\le y\},\qquad I_0(y):=\{i\in I(x):Y_i>y\}.
    \end{equation*}
    
    \emph{(a) CBIN.} Suppose $G$ is the weighted empirical CDF on $I(x)$ with insertion of the test atom:
    \begin{equation*}
        G\!\left[P_{\mu_{\text{obs}}(w)+w_{n+1}\delta_{(x,y')}},x\right](y)
        =
        \frac{\sum_{i\in I(x)} w_i\,\mathbb 1\{Y_i\le y\}+w_{n+1}\,\mathbb 1\{y'\le y\}}
        {\sum_{i\in I(x)} w_i+w_{n+1}}.
    \end{equation*}
    Then the weight-robust envelopes satisfy, for all $y$,
    \begin{align}
        \Pi^{\text{WU}}_u(y)
        &=
        \frac{ u_{n+1}+\sum_{i\in I_1(y)} u_i}
        { u_{n+1}+\sum_{i\in I_1(y)} u_i+\sum_{i\in I_0(y)} l_i},
        \label{eq:app-WU-cbin-upper}
        \\
        \Pi^{\text{WU}}_\ell(y)
        &=
        \frac{\sum_{i\in I_1(y)} l_i}
        { u_{n+1}+\sum_{i\in I_1(y)} l_i+\sum_{i\in I_0(y)} u_i}.
        \label{eq:app-WU-cbin-lower}
    \end{align}
    In particular, $\Pi^{\text{WU}}_\ell,\Pi^{\text{WU}}_u$ are piecewise constant with breakpoints at $\{Y_i:i\in I(x)\}$.
    
    \emph{(b) CMCPS.} If instead $G$ is a (split) conformity-measure CPS that is representable as the same weighted empirical CDF but applied to \emph{scores} $s_i:=S(Z_i)$ and $s_y:=S((x,y))$, and $y\mapsto s_y$ is nondecreasing for fixed $x$, then \eqref{eq:app-WU-cbin-upper}--\eqref{eq:app-WU-cbin-lower} hold verbatim with the events $\{Y_i\le y\}$ replaced by $\{s_i\le s_y\}$ (equivalently, with $Y_i$ replaced by $s_i$ and $y$ replaced by $s_y$).
\end{proposition}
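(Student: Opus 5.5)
The plan is to compute the two nested optimizations, over the insertion value $y'$ and over $w\in\mathcal W^{\text{WU}}$, explicitly. Fix $y$ and abbreviate $A(w):=\sum_{i\in I_1(y)}w_i$ and $B(w):=\sum_{i\in I_0(y)}w_i$. By A1 the sets $I_1(y),I_0(y)$ do not depend on $w$. Then
\[
G\bigl[P_{\mu_{\text{obs}}(w)+w_{n+1}\delta_{(x,y')}},x\bigr](y)=\frac{A(w)+w_{n+1}\mathbb 1\{y'\le y\}}{A(w)+B(w)+w_{n+1}} .
\]
Only the indicator depends on $y'$, and the expression is nondecreasing in that indicator. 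Hence the inner supremum is attained at any $y'\le y$ and gives $(A+w_{n+1})/(A+B+w_{n+1})$, while the inner infimum is attained at any $y'>y$ and gives $A/(A+B+w_{n+1})$. By A2 the denominators are positive throughout, so these are well-defined values of $\Pi_u$ and $\Pi_\ell$ in \eqref{eq:PS_up_main}.

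Next I would optimize over the box. The box $\mathcal W^{\text{WU}}$ is a product of intervals $[l_i,u_i]$, and each coordinate appears in only one of $A$, $B$, $w_{n+1}$. It therefore suffices to check coordinatewise monotonicity.

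For the upper envelope, write $h=(A+c)/(A+c+B)$ with $c=w_{n+1}$. The map $t\mapsto t/(t+B)$ is nondecreasing in $t=A+c\ge0$ for $B\ge0$, and $h$ is nonincreasing in $B$. So the supremum is obtained by taking $w_i=u_i$ for $i\in I_1(y)$, $w_{n+1}=u_{n+1}$, and $w_i=l_i$ for $i\in I_0(y)$. This gives \eqref{eq:app-WU-cbin-upper}.

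For the lower envelope, $g=A/(A+B+c)$ is nonincreasing in $B$ and in $c$. Writing $g=1-(B+c)/(A+B+c)$ shows it is nondecreasing in $A$. So the infimum is obtained by taking $w_i=l_i$ on $I_1(y)$, $w_i=u_i$ on $I_0(y)$, and $w_{n+1}=u_{n+1}$. This gives \eqref{eq:app-WU-cbin-lower}.

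Both extremal vectors lie in the box, so the sup and inf are attained, which gives equality rather than just a bound. The only care needed is the degenerate case where a numerator and denominator both vanish at a corner, for instance $A+c=0$. A2 excludes a zero denominator, so the monotonicity statements hold as stated, with constants treated as weakly monotone.

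Piecewise constancy follows because $I_1(y)$ and $I_0(y)$ change only when $y$ crosses some $Y_i$ with $i\in I(x)$.

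For part (b), replace $\mathbb 1\{Y_i\le y\}$ by $\mathbb 1\{s_i\le s_y\}$ and $\mathbb 1\{y'\le y\}$ by $\mathbb 1\{s_{y'}\le s_y\}$. The inner optimization over $y'$ needs both values of the indicator to be achievable. The value $1$ is achieved at $y'=y$. The value $0$ requires some $y'$ with $s_{y'}>s_y$, and this is the main point to check. If $y\mapsto s_y$ is merely nondecreasing and constant above $y$, the value $0$ may be unattainable, and then the infimum envelope would be larger than the stated formula. I would handle this either by assuming strict monotonicity, or by interpreting the CMCPS envelope with the standard convention that the test score may be placed above all thresholds, as in the split conformity-measure CPS of \citet{allen_-sample_2025}. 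After that, the box argument above applies verbatim with $I_1,I_0$ defined through scores.
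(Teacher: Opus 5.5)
Your proposal is correct and is essentially the paper's proof. You reduce the inner $\sup/\inf$ over $y'$ to the indicator taking the values $1$ and $0$, then use coordinatewise monotonicity of the resulting ratios on the product box $\prod_i[l_i,u_i]$ to identify the extremal corners, with $w_{n+1}=u_{n+1}$ in both envelopes. Your caveat on part (b) is a legitimate sharpening that the paper skips with its one-line appeal to ``monotonicity of $y\mapsto s_y$'': if $s$ is merely nondecreasing and constant on $[y,\infty)$, no insertion gives $s_{y'}>s_y$ and the lower-envelope formula can fail, so strict monotonicity (as for the usual residual score) or an explicit convention is indeed needed.
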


\begin{proof}
    Fix $y\in\mathbb R$. In CBIN,
    \begin{equation*}
        \sup_{y'} \mathbb 1\{y'\le y\}=1\quad(\text{attained by any }y'\le y),
        \qquad
        \inf_{y'} \mathbb 1\{y'\le y\}=0\quad(\text{attained by any }y'> y).
    \end{equation*}
    Thus the inner $\sup/\inf$ over $y'$ reduces to optimizing, over $w\in\mathcal W^{\text{WU}}$,
    \begin{equation*}
        \phi_u(w;y):=\frac{w_{n+1}+\sum_{i\in I_1(y)} w_i}{w_{n+1}+\sum_{i\in I_1(y)} w_i+\sum_{i\in I_0(y)} w_i},
        \qquad
        \phi_\ell(w;y):=\frac{\sum_{i\in I_1(y)} w_i}{w_{n+1}+\sum_{i\in I_1(y)} w_i+\sum_{i\in I_0(y)} w_i}.
    \end{equation*}
    A direct monotonicity check gives: $\phi_u(\cdot;y)$ is increasing in $w_{n+1}$ and in $w_i$ for $i\in I_1(y)$, and decreasing in $w_i$ for $i\in I_0(y)$; hence its maximum is attained at $w_{n+1}=u_{n+1}$, $w_i=u_i$ for $i\in I_1(y)$, and $w_i=l_i$ for $i\in I_0(y)$, yielding \eqref{eq:app-WU-cbin-upper}. Similarly, $\phi_\ell(\cdot;y)$ is increasing in $w_i$ for $i\in I_1(y)$ and decreasing in $w_{n+1}$ and in $w_i$ for $i\in I_0(y)$; hence its minimum is attained at $w_{n+1}=u_{n+1}$, $w_i=l_i$ for $i\in I_1(y)$, and $w_i=u_i$ for $i\in I_0(y)$, yielding \eqref{eq:app-WU-cbin-lower}. (Notice $w_{n+1}=u_{n+1}$ in both envelopes: increasing the insertion weight widens the band by raising the upper envelope and lowering the lower envelope.)
    
    For CMCPS under the stated score representation, the same argument applies after replacing $\mathbb 1\{Y_i\le y\}$ by $\mathbb 1\{s_i\le s_y\}$. Monotonicity of $y\mapsto s_y$ ensures the inner $\sup/\inf$ is again attained by score-extreme insertions, and the same box-extrema argument yields the claimed formulas.
\end{proof}

\begin{remark}[When the active set depends on $w$]
    The closed-form expressions above rely on the active set $I(x)$ (and, for CMCPS, the score map $S$) being fixed over the uncertainty box. If these objects depend on the candidate weight vector $w$, then the optimization over $\mathcal W^{\text{WU}}$ must also account for changes in the active set, and the simple box-extrema formulas need not hold globally. In such cases, one may still apply the formulas on regions of $\mathcal W^{\text{WU}}$ where the active set is constant.
\end{remark}

\paragraph{Implementation note.}
After sorting the breakpoints $\{Y_i:i\in I(x)\}$ (or scores $\{s_i:i\in I(x)\}$), \eqref{eq:app-WU-cbin-upper}--\eqref{eq:app-WU-cbin-lower} can be evaluated for all breakpoints using prefix sums of $\{l_i\}$ and $\{u_i\}$ in $O(k)$ time after an $O(k\log k)$ sort, where $k:=|I(x)|$.

\subsection{Conformal isotonic distributional regression (CIDR): box-extremal weights}
\label{app:wurobust-idr}
We specialize the weight-robust envelope to conformal IDR, where $G$ is defined via antitonic regression of indicators $\mathbb{1}\{Y_i\le y\}$ with respect to the order used by IDR. For a given test covariate $x:=X_{n+1}$ and threshold $y\in\mathbb R$, let $(X'_1,Y'_1),\dots,(X'_{n+1},Y'_{n+1})$ be the ordered pairs used by the IDR fit (augmented with the test atom). Define $T_i(y):=\mathbb{1}\{Y'_i\le y\}$.

For an ordered weight vector $w'$ feasible under the box constraints, the IDR predictive CDF at the test position $i^\star$ can be written as
\begin{equation*}
    f_{i^\star}(w',y)
    := \min_{s\le i^\star}\ \max_{t\ge i^\star}
    \frac{\sum_{r=s}^t \mathbb{1}\{Y'_r\le y\}\,w'_r}{\sum_{r=s}^t w'_r},
\end{equation*}
ignoring intervals of zero total weight. The weight-robust IDR envelopes optimize this functional over the feasible weight box.

\begin{proposition}[Weight-extrema for antitonic functionals]
    \label{prop:WU-extrema-antitonic}
    Fix $y\in\mathbb{R}$ and let $Y'_1,\dots,Y'_{n+1}$ be responses in the fixed order used by antitonic regression. Let $\hat w'_i\ge 0$ be the corresponding estimated weights, and define the feasible ordered box
    \begin{equation*}
        \mathcal{W}^{\text{WU}'}
        := \Big\{w'\in\mathbb{R}_+^{n+1}: L'_i \hat w'_i \le w'_i \le U'_i \hat w'_i\ \text{for all }i\Big\}.
    \end{equation*}
    Define, for $w'\in\mathcal{W}^{\text{WU}'}$ and index $j$,
    \begin{equation*}
        f_j(w', y)
        := \min_{s\le j}\ \max_{t\ge j}
        \frac{\sum_{r=s}^t \mathbb{1}\{Y'_r\le y\} w'_r}{\sum_{r=s}^t w'_r},
    \end{equation*}
    ignoring intervals of zero total weight. Define box-extreme weights
    \begin{equation*}
         w_i^{\prime(+)} :=
        \begin{cases}
        U'_i \hat w'_i, & Y'_i\le y \\
        L'_i \hat w'_i, & Y'_i> y ,
        \end{cases}
        \qquad
        w_i^{\prime(-)} :=
        \begin{cases}
        L'_i \hat w'_i, & Y'_i\le y,\\
        U'_i \hat w'_i, & Y'_i> y.
        \end{cases}
    \end{equation*}
    Then, for every $j$,
    \begin{equation*}
        \inf_{w'\in\mathcal{W}^{\text{WU}'}} f_j(w', y)
        = f_j\big(w^{\prime(-)},y\big),
        \qquad
        \sup_{w'\in\mathcal{W}^{\text{WU}'}} f_j(w',y)
        = f_j\big(w^{\prime(+)},y\big).
    \end{equation*}
    Moreover, $f_j(\cdot,y)$ is invariant under positive rescaling of $w'$.
\end{proposition}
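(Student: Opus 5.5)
Fix $y$ and write $T_r:=\mathbb 1\{Y'_r\le y\}\in\{0,1\}$. The plan is to prove a coordinatewise monotonicity statement and then pass it through the min--max. For each interval $[s,t]$ with positive total weight, the ratio
\begin{equation*}
    R_{s,t}(w') := \frac{\sum_{r=s}^t T_r w'_r}{\sum_{r=s}^t w'_r}
\end{equation*}
is a weighted average of zeros and ones. Let $A$ be the sum of $T_rw'_r$ over $r\in[s,t]$ and $B$ the corresponding total weight. Then $\partial R_{s,t}/\partial w'_r=(T_r B-A)/B^2$. This is $\ge 0$ when $T_r=1$, because $A\le B$, and $\le 0$ when $T_r=0$. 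Hence each $R_{s,t}$ is nondecreasing in the coordinates with $T_r=1$ and nonincreasing in those with $T_r=0$. Indices outside $[s,t]$ do not enter at all.

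The key ordering step is as follows. Every $w'$ in the box satisfies $w'^{(-)}\preceq w'\preceq w'^{(+)}$ in the partial order "larger on $\{T=1\}$, smaller on $\{T=0\}$". I would move coordinates one at a time along segments inside the box, which is a product of intervals, so each segment stays feasible. By the derivative sign, each such move changes every $R_{s,t}$ monotonically. Therefore $R_{s,t}(w'^{(-)})\le R_{s,t}(w')\le R_{s,t}(w'^{(+)})$ for every interval $[s,t]$.

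Since $\min_s\max_t$ is monotone in its arguments, this gives $f_j(w'^{(-)},y)\le f_j(w',y)\le f_j(w'^{(+)},y)$. Because $w'^{(\pm)}$ belong to $\mathcal W^{\text{WU}'}$, the infimum and supremum are attained there, which proves the claim. Scale invariance is immediate: each $R_{s,t}$ is homogeneous of degree $0$, and the set of zero-weight intervals is unchanged under positive rescaling.

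The main obstacle is the convention "ignoring intervals of zero total weight", since the index set of admissible intervals can change with $w'$ when some $l_i=0$. I would handle it as follows.

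\emph{Lower bound on $f_j(w')$.} Any interval with positive total weight under $w'$ but zero weight under $w'^{(-)}$ contributes only to the $\max_t$ over $t\ge j$ for a given $s$. Its presence for $w'$ can only raise that maximum.

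\emph{Upper bound on $f_j(w')$.} Intervals that are admissible for $w'^{(+)}$ but not for $w'$ must be treated separately. The issue is whether extra $s$-candidates can lower the minimum for $w'^{(+)}$. I would argue by a limiting or continuity step instead: replace zero lower bounds by $\varepsilon>0$, apply the argument above where all intervals are admissible, and let $\varepsilon\downarrow0$. This uses the fact that zero-weight atoms do not change ratios of intervals that contain positive weight. If this proves delicate, I would state the result under the assumption that all $l_i>0$, as the paper implicitly does.
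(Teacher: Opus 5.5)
Your core argument is the paper's. You use the sign of $\partial A_{s,t}/\partial w'_k=(T_kD-N)/D^2$, carry coordinatewise monotonicity through $\max_{t\ge j}$ and then $\min_{s\le j}$, note that $w'^{(\pm)}$ are the extremal elements of the product box in the order ``up on $\{T=1\}$, down on $\{T=0\}$'', and use degree-zero homogeneity for scale invariance. When every lower bound is strictly positive, this is complete.

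\textbf{The gap.} The statement also covers the case where $L'_i\hat w'_i$ vanishes, and there your repair fails. $f_j$ is not continuous at points where some interval has zero total weight, so the $\varepsilon\downarrow 0$ limit does not return $f_j(w')$. The fact you cite, that zero-weight atoms do not change ratios of intervals already carrying weight, is true. But it misses intervals carrying no weight at all. A perturbation activates those with values fixed by the ratios of the perturbations, and these values survive the limit.

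Concretely, take three ordered atoms, $j=2$, $T=(1,0,1)$ and $w'=(1,0,0)$; this is feasible, e.g., in the box $[1,1]\times[0,1]\times[0,1]$. Both $[2,2]$ and $[2,3]$ are ignored, so only $s=1$ counts and $f_2(w')=1$. For $w'^{\varepsilon}=(1,\varepsilon,\varepsilon)$, the index $s=2$ becomes active with value $\max(A_{2,2},A_{2,3})=\max(0,1/2)$. Since $g_1=(1+\varepsilon)/(1+2\varepsilon)>1/2$, you get $f_2(w'^{\varepsilon})=1/2$ for every $\varepsilon>0$. Falling back to $l_i>0$ proves a weaker statement. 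The paper does not assume this; it treats zero-weight intervals explicitly in its step (ii).

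\textbf{The missing observation.} Suppose $w'_k$ is raised from $0$ to some $b>0$.
\begin{itemize}
\item Every interval containing $k$ that previously had zero weight becomes admissible with value exactly $T_k$.
\item Any $s$ that previously had no admissible $t\ge j$ becomes admissible with $g_s=T_k$. This is because before the move $w'_s=\dots=w'_{n+1}=0$.
\end{itemize}
If $T_k=1$, the new candidates have value $1$: they can only raise each $\max_t$ and cannot lower the $\min_s$. If $T_k=0$, the new candidates have value $0$: they cannot raise any $\max_t$ and can only lower the $\min_s$. Your derivative argument handles intervals that were already active. Together these make $f_j$ coordinatewise monotone on the whole box. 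Hence $f_j(w'^{(-)})\le f_j(w')\le f_j(w'^{(+)})$ holds with no positivity assumption and no limit.

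The same remark fills two holes in your lower-bound paragraph. First, newly active intervals can create a new $s$-candidate, not only enlarge some $\max_t$; this is harmless because its value is $1$. Second, some intervals are active under $w'^{(-)}$ but not under $w'$; this is harmless because their value under $w'^{(-)}$ is $0$.
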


\begin{proof}
    Fix $y\in\mathbb{R}$, the ordered responses $Y'_1,\dots,Y'_{n+1}$. For $w'\in\mathcal{W}^{\text{WU}'}$ and indices $s\le t$, write
    \begin{equation*}
        A_{s,t}(w', y) := \frac{\sum_{r=s}^t \mathbb{1}\{Y'_r\le y\} w'_r}{\sum_{r=s}^t w'_r},
    \end{equation*}
    with the convention that intervals with $\sum_{r=s}^t w'_r = 0$ are ignored. Then, for any fixed $j$,
    \begin{equation*}
        f_j(w',y) = \min_{s\le j} \ \max_{t\ge j} A_{s,t}(w', y).
    \end{equation*}
    
    \emph{(i) Scale invariance.}
    For any $c>0$,
    \begin{equation*}
        A_{s,t}(c w', y) = A_{s,t}(w', y),
    \end{equation*}
    hence $f_j(c w',y)=f_j(w',y)$. Therefore optimizing $f_j$ over $\mathcal{W}^{\text{WU}'}$ or over the corresponding normalized probability weights is equivalent.
    
    \emph{(ii) Coordinatewise monotonicity of $A_{s,t}$.}
    Fix an interval $[s,t]$ and a coordinate $k$.

    If $k\notin[s,t]$, then $A_{s,t}(w', y)$ does not depend on $w'_k$.

    If $k\in[s,t]$ and $\sum_{r=s}^t w'_r>0$, writing
    \begin{equation*}
        N := \sum_{r=s}^t T_r w'_r, \qquad D := \sum_{r=s}^t w'_r,
    \end{equation*}
    we have
    \begin{equation*}
        \frac{\partial A_{s,t}}{\partial w'_k} = \frac{T_k D - N}{D^2}.
    \end{equation*}
    If $T_k=1$, then
    \begin{equation*}
        \frac{\partial A_{s,t}}{\partial w'_k}
        = \frac{1 - A_{s,t}}{D} \ge 0,
    \end{equation*}
    so $A_{s,t}$ is increasing in $w'_k$.
    If $T_k=0$, then
    \begin{equation*}
        \frac{\partial A_{s,t}}{\partial w'_k}
        = -\frac{N}{D^2} \le 0,
    \end{equation*}
    so $A_{s,t}$ is decreasing in $w'_k$.

    If $\sum_{r=s}^t w'_r=0$ and we increase a single $w'_k$ in $[s,t]$, the interval becomes active with value $A_{s,t}(w', y)=T_k\in\{0,1\}$, which is consistent with the same monotonicity conclusions. Thus, for every interval $[s,t]$:
    \begin{equation*}
        \begin{aligned}
            & Y'_k\le y,\ w'_k \le \tilde w'_k \ \Longrightarrow\ A_{s,t}(w',y) \le A_{s,t}(\tilde w',y),\\
            &Y'_k> y,\ w'_k \ge \tilde w'_k \ \Longrightarrow\ A_{s,t}(w',y) \le A_{s,t}(\tilde w',y).
        \end{aligned}
    \end{equation*}

    \emph{(iii) Monotonicity of $f_j$.}
    For each fixed $s\le j$, define
    \begin{equation*}
        g_s(w', y) := \max_{t\ge j} A_{s,t}(w', y).
    \end{equation*}
    Being the pointwise maximum of $\{A_{s,t}\}_{t\ge j}$, $g_s$ inherits the same coordinatewise monotonicity:
    \begin{equation*}
        \begin{aligned}
            &Y'_k\le y,\ w'_k \le \tilde w'_k \ \Longrightarrow\ g_s(w',y) \le g_s(\tilde w',y),\\
            &Y'_k> y,\ w'_k \ge \tilde w'_k \ \Longrightarrow\ g_s(w',y) \le g_s(\tilde w',y).
        \end{aligned}
    \end{equation*}
    Then
    \begin{equation*}
        f_j(w',y) = \min_{s\le j} g_s(w', y)
    \end{equation*}
    is the pointwise minimum of the $\{g_s\}_{s\le j}$ and therefore satisfies, for any two weight vectors that agree in all other coordinates,
    \begin{equation*}
        \begin{aligned}
            &Y'_k\le y,\ w'_k \le \tilde w'_k \ \Longrightarrow\ f_j(w',y) \le f_j(\tilde w',y),\\
            &Y'_k> y,\ w'_k \ge \tilde w'_k \ \Longrightarrow\ f_j(w',y) \le f_j(\tilde w',y).
        \end{aligned}
    \end{equation*}
    
    \emph{(iv) Extremizers on the WU box.}
    Since
    \(
      \mathcal{W}^{\text{WU}'}
      = \prod_{i=1}^{n+1} [L'_i \hat w'_i,\, U'_i \hat w'_i],
    \)
    the vectors $w^{\prime(+)}$ and $w^{\prime(-)}$ are, respectively, the maximal and minimal elements of $\mathcal{W}^{\text{WU}'}$ that increase coordinates with $Y'_i\le y$ and decrease coordinates with $Y'_k> y$.

    By the monotonicity from (iii), for every fixed $j$ and every $w'\in\mathcal{W}^{\text{WU}'}$,
    \begin{equation*}
        f_j\big(w^{\prime(-)},y\big)
        \le f_j(w',y)
        \le f_j\big(w^{\prime(+)},y\big),
    \end{equation*}
    which yields
    \begin{equation*}
        \inf_{w'\in\mathcal{W}^{\text{WU}'}} f_j(w', y)
        = f_j\big(w^{\prime(-)},y\big),
        \qquad
        \sup_{w'\in\mathcal{W}^{\text{WU}'}} f_j(w',y)
        = f_j\big(w^{\prime(+)},y\big).
    \end{equation*}
\end{proof}

\section{Experimental details and additional figures}
\label{app:experiments}

\subsection{Synthetic experiment: data-generating process}
\label{app:syn-dgp}
We consider a pure covariate shift regression problem with $p=5$ covariates. First, we draw a large pool from a mildly correlated Gaussian base distribution $\mathcal{P}_X$, specifically,
\begin{equation*}
    X \sim \mathcal N(0, \Sigma), \qquad \Sigma = (1-a^2) I_5 + a^2 \mathbf{1}_5 \mathbf{1}_5^T, \qquad a = 0.2
\end{equation*}
so that each coordinate has unit variance and each pair of distinct coordinates has correlation $a^2$. Note that $\mathbf{1}_5$ denotes the $5$-dimensional vector of ones and $I_5$ denotes the $5 \times 5$ identity matrix. 

Next, we assign the domain membership through
\begin{equation*}
    S \mid X = x \sim \text{Bernoulli}(\pi(x)), \quad \text{where } \pi(x) = 0.1 + 0.8 \, \text{expit}(\lambda \beta^T x),
\end{equation*}
with
\begin{equation*}
    \beta = (1.2, -1.0, 0.8, -0.6, 0.4)^T \quad \text{and} \quad \lambda = 2.5.
\end{equation*}
We interpret $S=0$ as the source domain and $S=1$ as the target domain. Hence, the source and target covariate laws are
\begin{equation*}
    \mathcal P_X^{(0)}=\mathcal L(X\mid S=0),
    \qquad
    \mathcal P_X^{(1)}=\mathcal L(X\mid S=1),
\end{equation*}
which differ whenever $\pi(x)$ depends nontrivially on $x$.

Conditional on $X=x$, generate the response as
\begin{equation*}
    Y=\mu(x)+\varepsilon_Y,\qquad \varepsilon_Y\sim\mathcal N(0,1),
\end{equation*}
independently of $S$ given $X$, with
\begin{equation*}
    \mu(x)=2\sin(x_1x_2)+1.2\tanh(x_3)+0.8x_4^2+0.6\max(0,x_5).
\end{equation*}

Therefore,
\begin{equation*}
    \mathcal P_{Y\mid X,S}= \mathcal P_{Y\mid X}.
\end{equation*}
Thus, the conditional outcome law is the same in both domains, and only the marginal covariate distribution shifts because the selection probability $\pi(x)$ depends on $ x$. Conditioning on $S$ tilts the base covariate distribution differently in the source and target domains.

To obtain a dataset of the desired size, we first generate an oversized pool of candidate observations and then subsample without replacement $n_{\text{source}}=1000$ points with $S=0$ and $n_{\text{target}}=500$ points with $S=1$. This is then repeated over 100 Monte Carlo replicates to produce the results in Table \ref{tab:syn-exp-results}.

\subsection{Setup for the conformal CDF-band visualizations}
\label{app:syn-cdf-bands}
Figures~\ref{fig:cdf-band-example} and \ref{fig:cdf-band-weight-example} are generated using the synthetic covariate-shift model from Appendix~\ref{app:syn-dgp}. In both figures, $S\in\{0,1\}$ denotes domain membership, with $S=0$ the source domain and $S=1$ the target domain, and the importance weights are odds weights
\begin{equation*}
    w_i:= \frac{or(X_i)}{\sum_{j=1}^{n+1} or(X_j)}, \qquad or(x)=\frac{\pi(x)}{1-\pi(x)}, \qquad \pi(x)=\mathbb P(S=1\mid X=x).
\end{equation*}
For Figure~\ref{fig:cdf-band-example}, we first generate one large synthetic dataset with $n_{\text{source}}=100{,}000$ and $n_{\text{target}}=50{,}000$, randomly shuffle it, and then consider nested prefixes of total sizes
\begin{equation*}
    n\in\{100,200,2000,20000\}.
\end{equation*}
Within each prefix, we retain the source-domain observations ($S=0$), split them evenly into a training subset and a calibration subset, and use the first available target-domain observation ($S=1$) as the test point. The column headers report the realized values of $(n_{\text{train}},n_{\text{cal}})$, which vary with the number of source observations present in each prefix. The darker band uses point-estimated odds weights, while the lighter band uses confidence-interval-based odds weights obtained by propagating uncertainty from the fitted logistic propensity model.

For Figure~\ref{fig:cdf-band-weight-example}, we fix a single synthetic dataset generated with $n_{\text{source}}=1000$, $n_{\text{target}}=500$, shift-strength parameter $\lambda=2.5$. The source-domain observations are split evenly into training and calibration subsets. We then select four target-domain test points according to the empirical quantiles of the true importance weights $w(x)$ among target-domain observations, namely the $0$, $0.25$, $0.75$, and $1$ quantiles (using nearest-quantile selection). Thus, this figure keeps the sample size fixed and varies only the degree of covariate shift of the displayed target test point.

In both figures, the black curve is the oracle conditional CDF $y\mapsto \Phi\!\left(y-\mu(x)\right)$ at the displayed test point, where $\mu(x)$ is given by the data-generating mechanism.

\subsection{Compute resources}
\label{app:compute}
All experiments are computationally light and were run on a single Apple M1~Pro CPU (no GPU, no cluster). The synthetic experiment (Section~\ref{sec:syn-exp}) and the AAV use case (Section~\ref{sec:feedback-use-case}) each complete in well under an hour of wall-clock time per configuration. The dominant costs are the closed-form box-extrema for CBIN/CMCPS ($O(k\log k)$ per test point) and two PAVA evaluations per threshold for CIDR (Appendix~\ref{app:computation}). For the AAV packaging predictor $\hat\mu(\cdot)$, which we use as-is rather than retrain, we refer to the compute description in \citet{fannjiang_conformal_2022} and \citet{zhu_optimal_2024}.

\subsection{Additional figures}
\label{app:additional-figures}
\begin{figure*}[!b]
    \centering
    \includegraphics[width=0.9\textwidth]{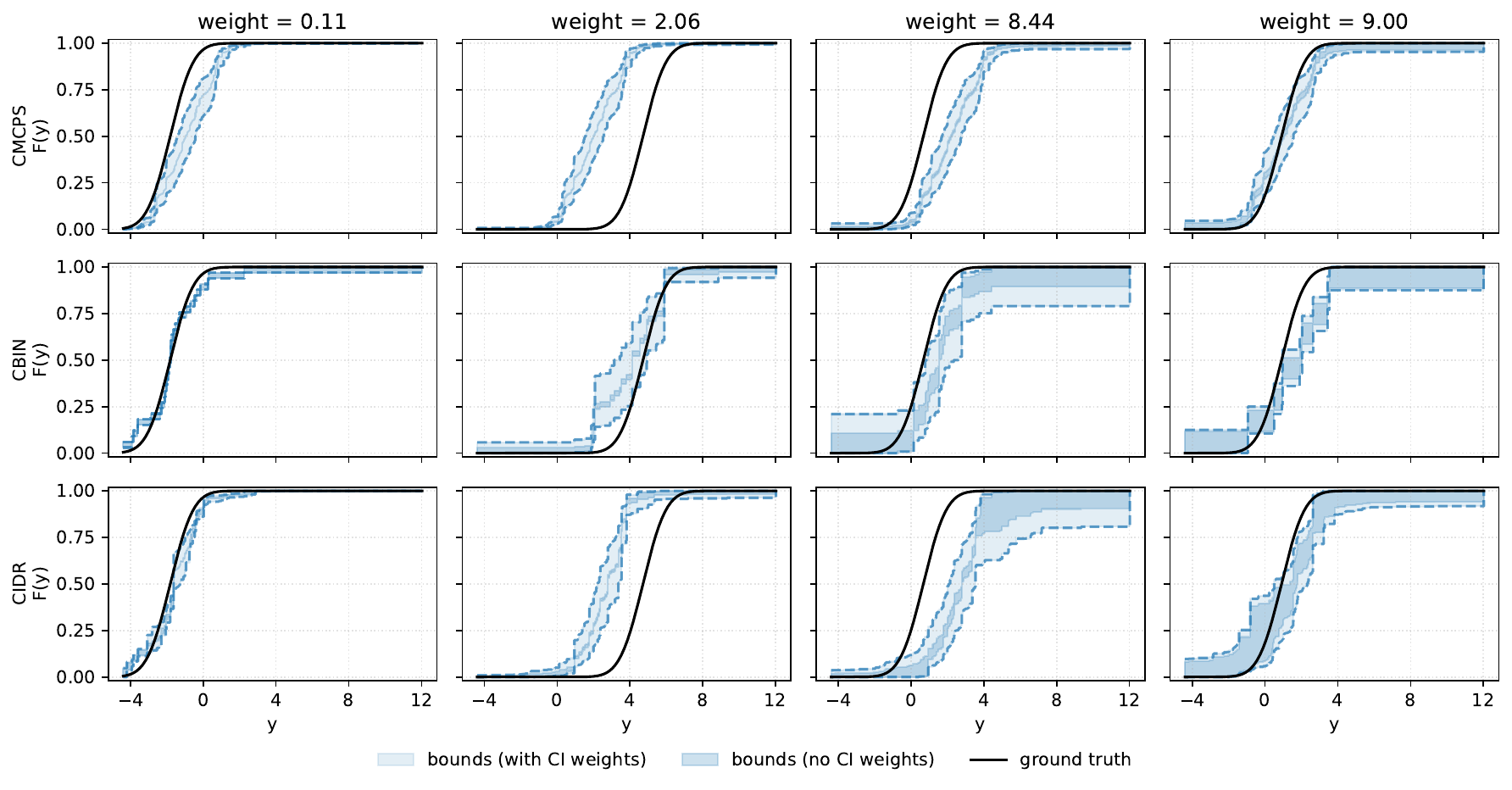}
    \caption{Effect of importance weight on conformal CDF-band thickness under estimated covariate shift. Using the synthetic setup described in Appendix~\ref{app:syn-dgp}, we plot weighted conformal predictive CDF bands for four target-domain test points as the true importance weight increases. Results are shown for CMCPS, CBIN, and CIDR. The lighter region uses confidence-interval-based odds weights, while the darker region uses point-estimated odds weights. As the importance weight increases, the bands widen, indicating greater epistemic uncertainty for target-domain test points that are less well supported by the source-domain training and calibration data. The black curve denotes the oracle conditional CDF.}
    \label{fig:cdf-band-weight-example}
\end{figure*}

\begin{figure}
  \centering
  \begin{subfigure}[b]{0.8\textwidth}
    \centering
    \includegraphics[width=\linewidth]{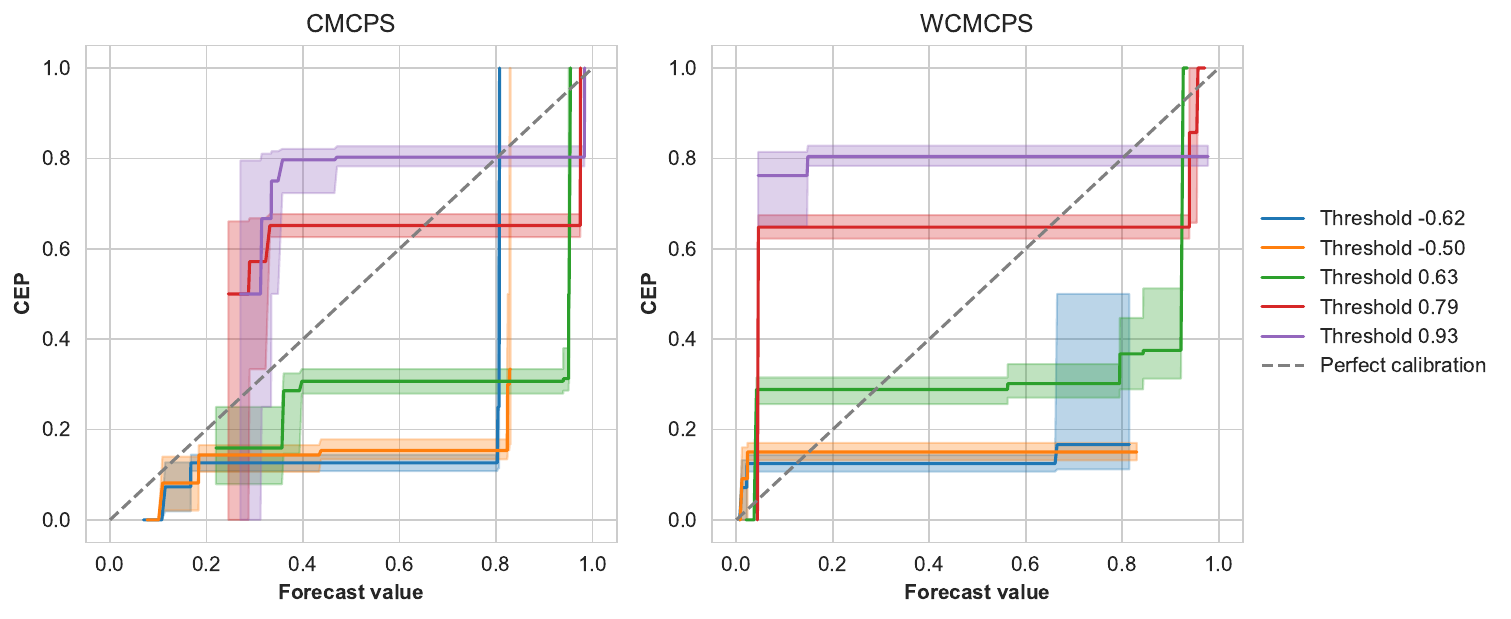}
    \caption{CMCPS / WCMCPS}
    \label{fig:aav_reldiag_lambda7_cmcps}
  \end{subfigure}\hfill
  \begin{subfigure}[b]{0.8\textwidth}
    \centering
    \includegraphics[width=\linewidth]{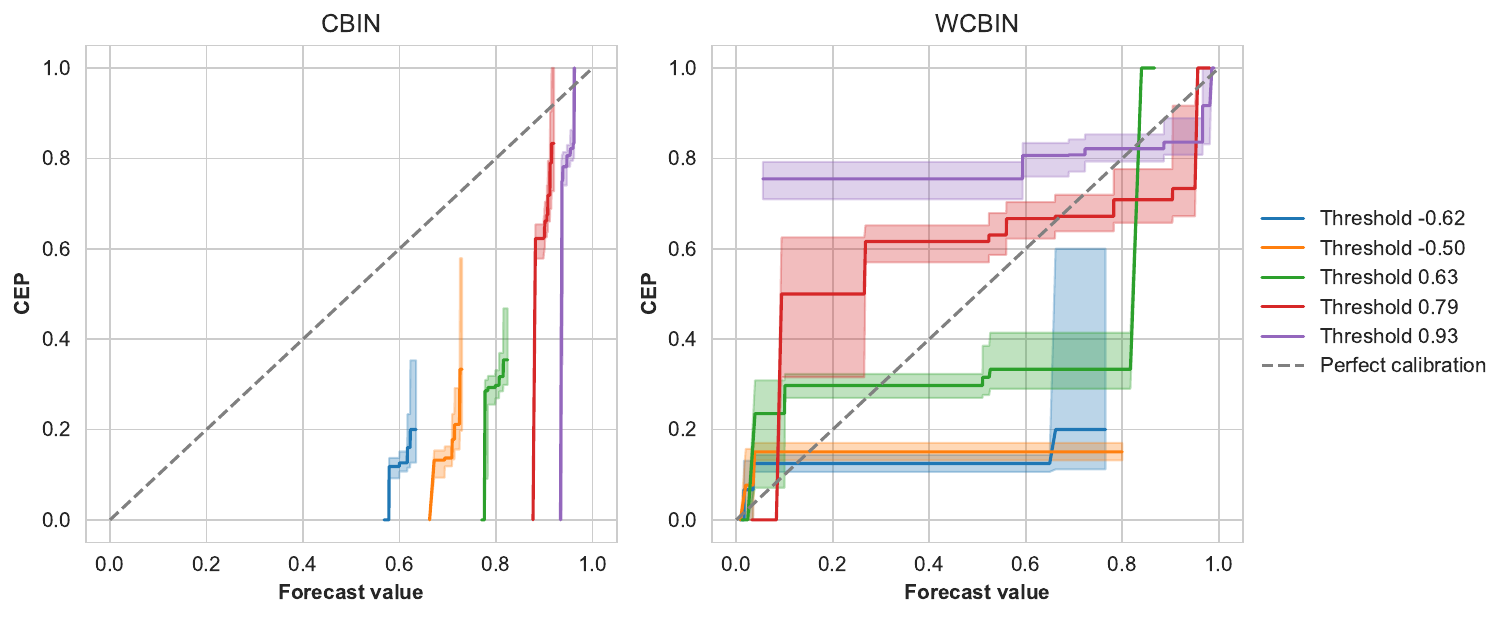}
    \caption{CBIN / WCBIN}
    \label{fig:aav_reldiag_lambda7_cbin}
  \end{subfigure}\hfill
  \begin{subfigure}[b]{0.8\textwidth}
    \centering
    \includegraphics[width=\linewidth]{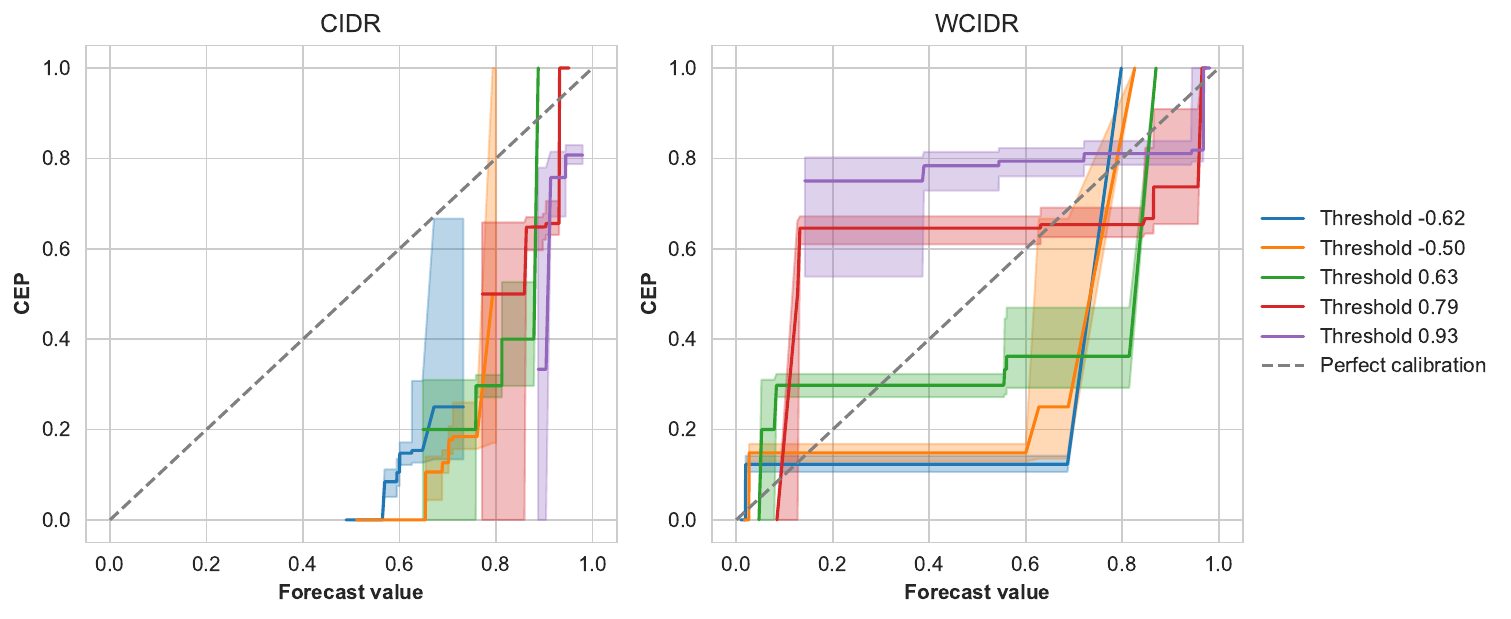}
    \caption{CIDR / WCIDR}
    \label{fig:aav_reldiag_lambda7_cidr}
  \end{subfigure}
  \caption{CORP reliability diagrams \citep{dimitriadis_stable_2021}. The x-axis shows predicted probabilities, and the y-axis shows the observed frequency of the outcome. A perfectly calibrated model lies along the diagonal: predictions match observed outcomes. Curves above the diagonal indicate underestimation, while curves below indicate overestimation. Threshold-wise CORP reliability diagrams for the AAV design task at inverse temperature $\lambda = 7$. Each panel shows weighted (right) and unweighted (left) calibration curves across several extreme packaging thresholds, comparing FCS-aware calibration (W*) with its unweighted counterpart. The shaded areas represent 95\% confidence intervals for the reliability diagram, constructed using bootstrapping.}
  \label{fig:aav_reldiag_lambda7}
\end{figure}

\begin{figure}
  \centering
  \begin{subfigure}[b]{0.8\textwidth}
    \centering
    \includegraphics[width=\linewidth]{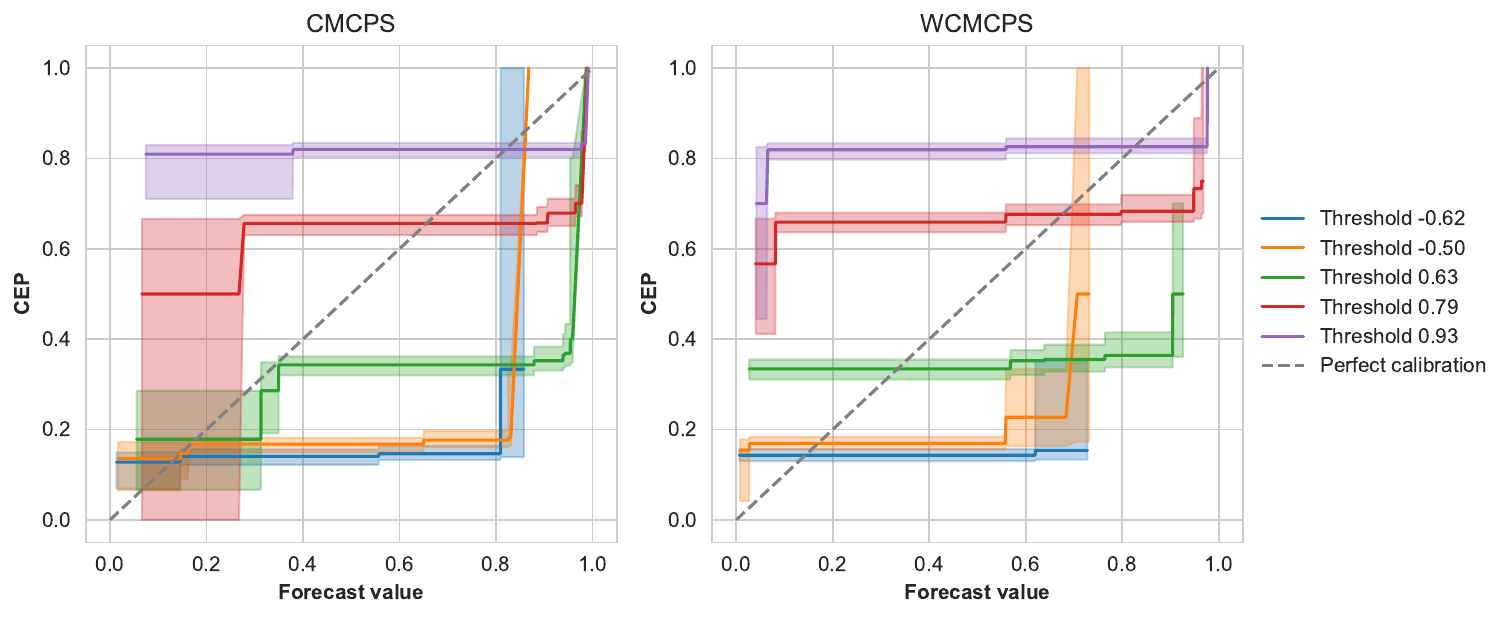}
    \caption{CMCPS / WCMCPS}
    \label{fig:aav_reldiag_lambda6_cmcps}
  \end{subfigure}\hfill
  \begin{subfigure}[b]{0.8\textwidth}
    \centering
    \includegraphics[width=\linewidth]{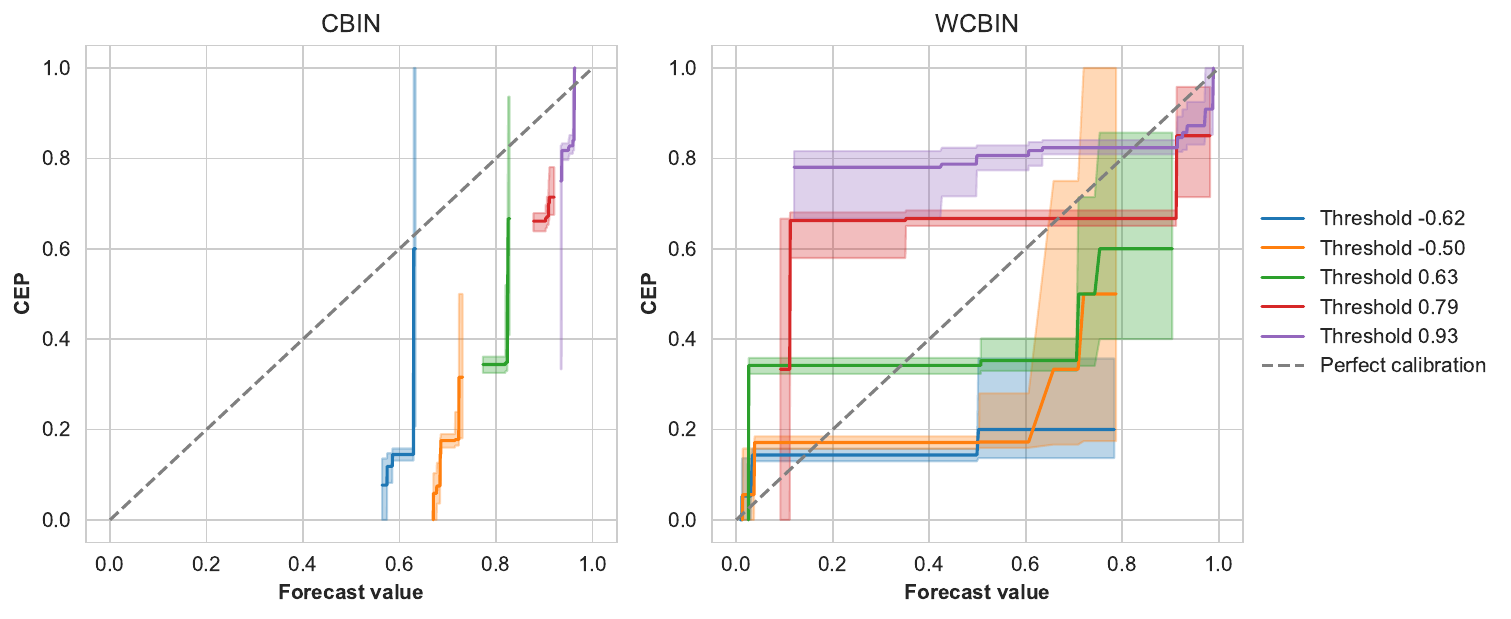}
    \caption{CBIN / WCBIN}
    \label{fig:aav_reldiag_lambda6_cbin}
  \end{subfigure}\hfill
  \begin{subfigure}[b]{0.8\textwidth}
    \centering
    \includegraphics[width=\linewidth]{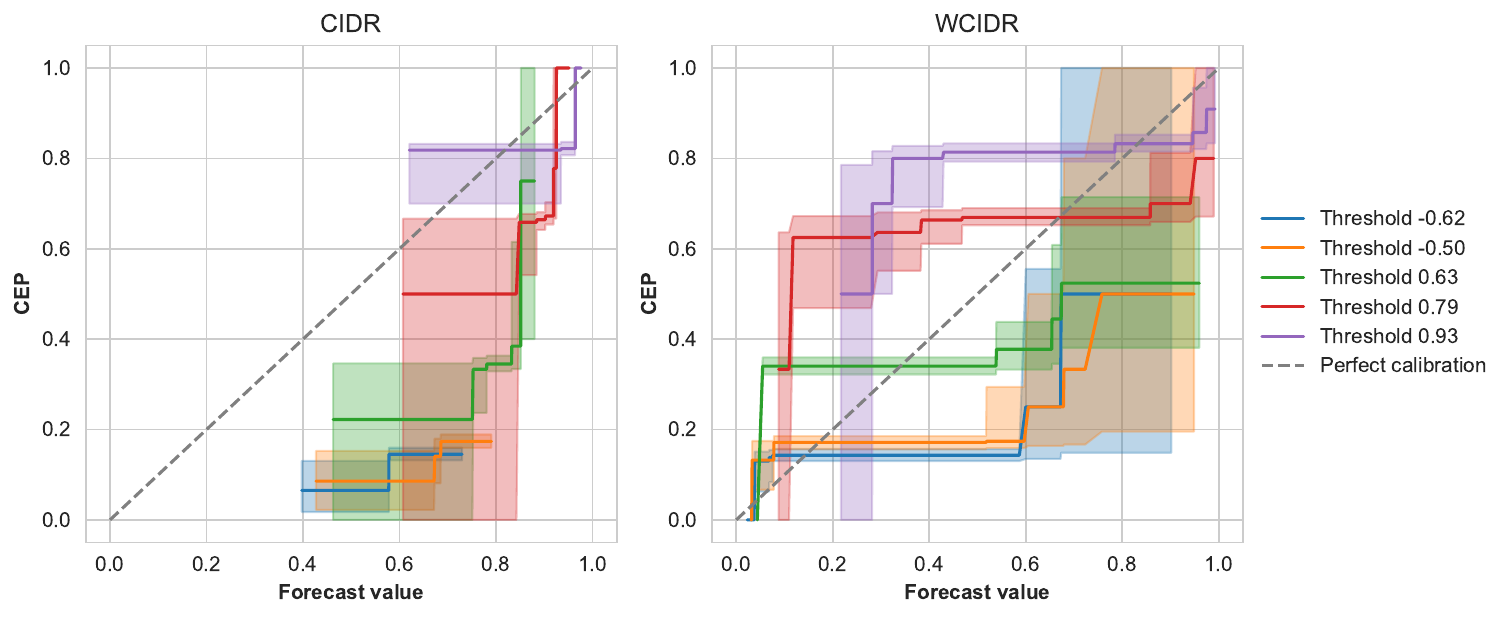}
    \caption{CIDR / WCIDR}
    \label{fig:aav_reldiag_lambda6_cidr}
  \end{subfigure}
  \caption{CORP reliability diagrams \citep{dimitriadis_stable_2021}. The x-axis shows predicted probabilities, and the y-axis shows the observed frequency of the outcome. A perfectly calibrated model lies along the diagonal: predictions match observed outcomes. Curves above the diagonal indicate underestimation, while curves below indicate overestimation. Threshold-wise CORP reliability diagrams for the AAV design task at inverse temperature $\lambda = 6$. Each panel shows weighted (right) and unweighted (left) calibration curves across several extreme packaging thresholds, comparing FCS-aware calibration (W*) with its unweighted counterpart. The shaded areas represent 95\% confidence intervals for the reliability diagram, constructed using bootstrapping.}
  \label{fig:aav_reldiag_lambda6}
\end{figure}

\begin{figure}
  \centering
  \begin{subfigure}[b]{0.8\textwidth}
    \centering
    \includegraphics[width=\linewidth]{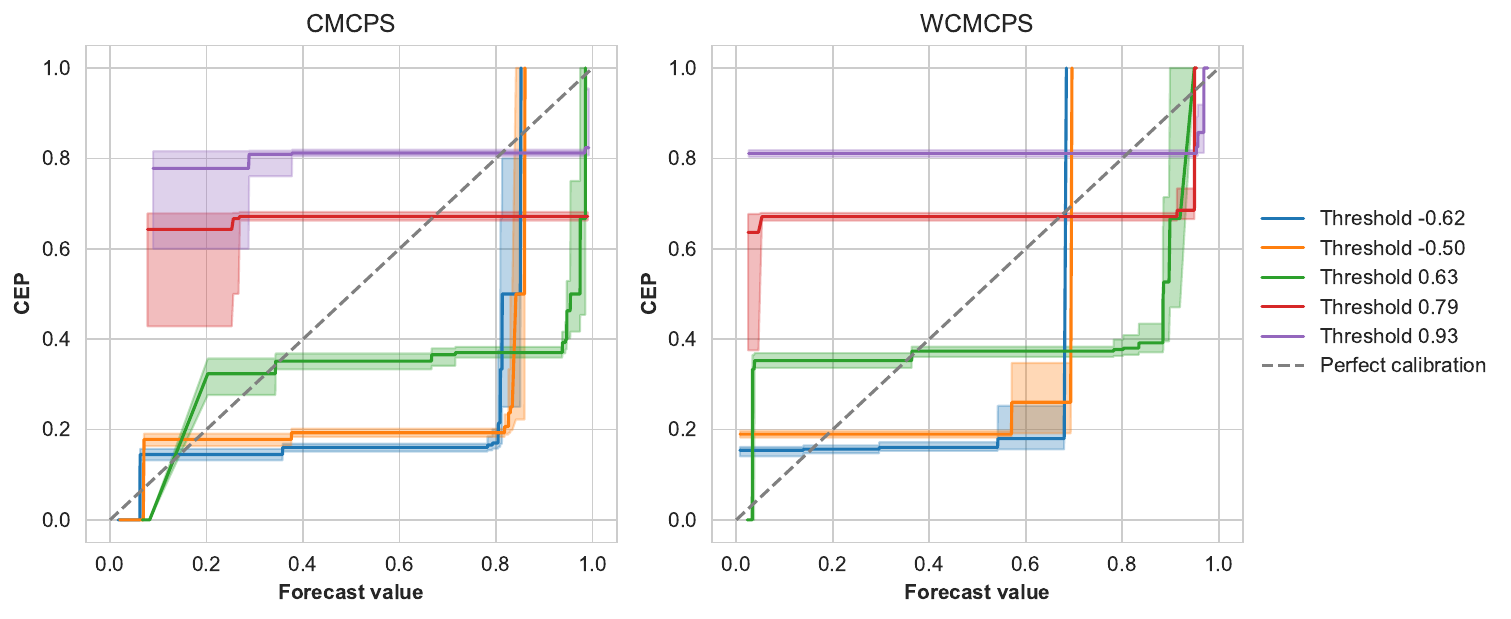}
    \caption{CMCPS / WCMCPS}
    \label{fig:aav_reldiag_lambda5_cmcps}
  \end{subfigure}\hfill
  \begin{subfigure}[b]{0.8\textwidth}
    \centering
    \includegraphics[width=\linewidth]{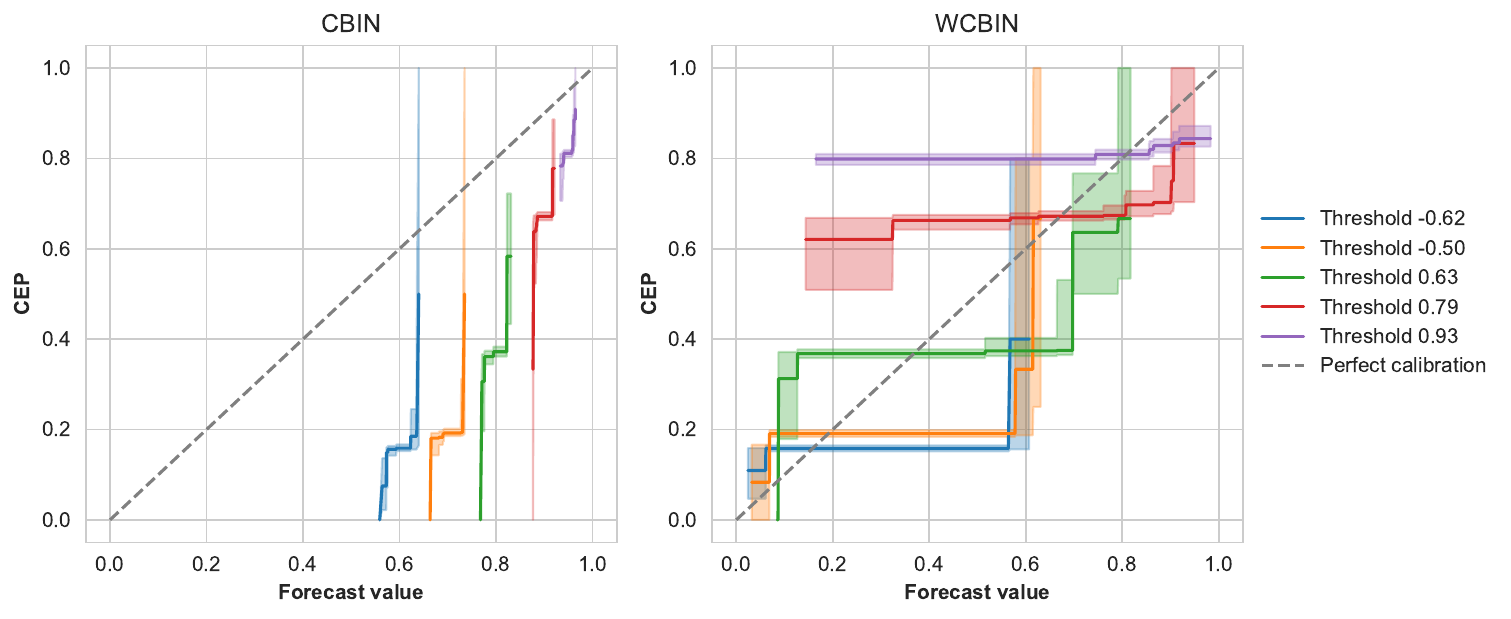}
    \caption{CBIN / WCBIN}
    \label{fig:aav_reldiag_lambda5_cbin}
  \end{subfigure}\hfill
  \begin{subfigure}[b]{0.8\textwidth}
    \centering
    \includegraphics[width=\linewidth]{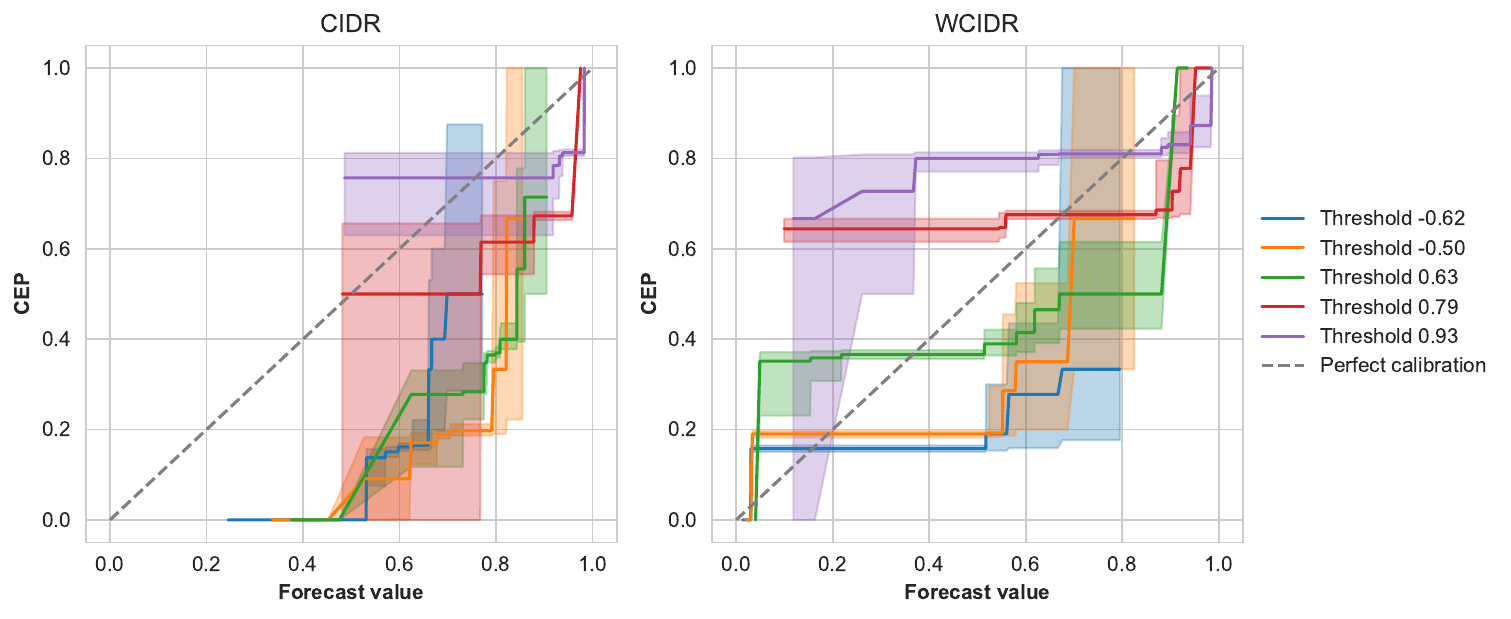}
    \caption{CIDR / WCIDR}
    \label{fig:aav_reldiag_lambda5_cidr}
  \end{subfigure}
  \caption{CORP reliability diagrams \citep{dimitriadis_stable_2021}. The x-axis shows predicted probabilities, and the y-axis shows the observed frequency of the outcome. A perfectly calibrated model lies along the diagonal: predictions match observed outcomes. Curves above the diagonal indicate underestimation, while curves below indicate overestimation. Threshold-wise CORP reliability diagrams for the AAV design task at inverse temperature $\lambda = 5$. Each panel shows weighted (right) and unweighted (left) calibration curves across several extreme packaging thresholds, comparing FCS-aware calibration (W*) with its unweighted counterpart. The shaded areas represent 95\% confidence intervals for the reliability diagram, constructed using bootstrapping.}
  \label{fig:aav_reldiag_lambda5}
\end{figure}

\begin{figure}
  \centering
  \begin{subfigure}[b]{0.8\textwidth}
    \centering
    \includegraphics[width=\linewidth]{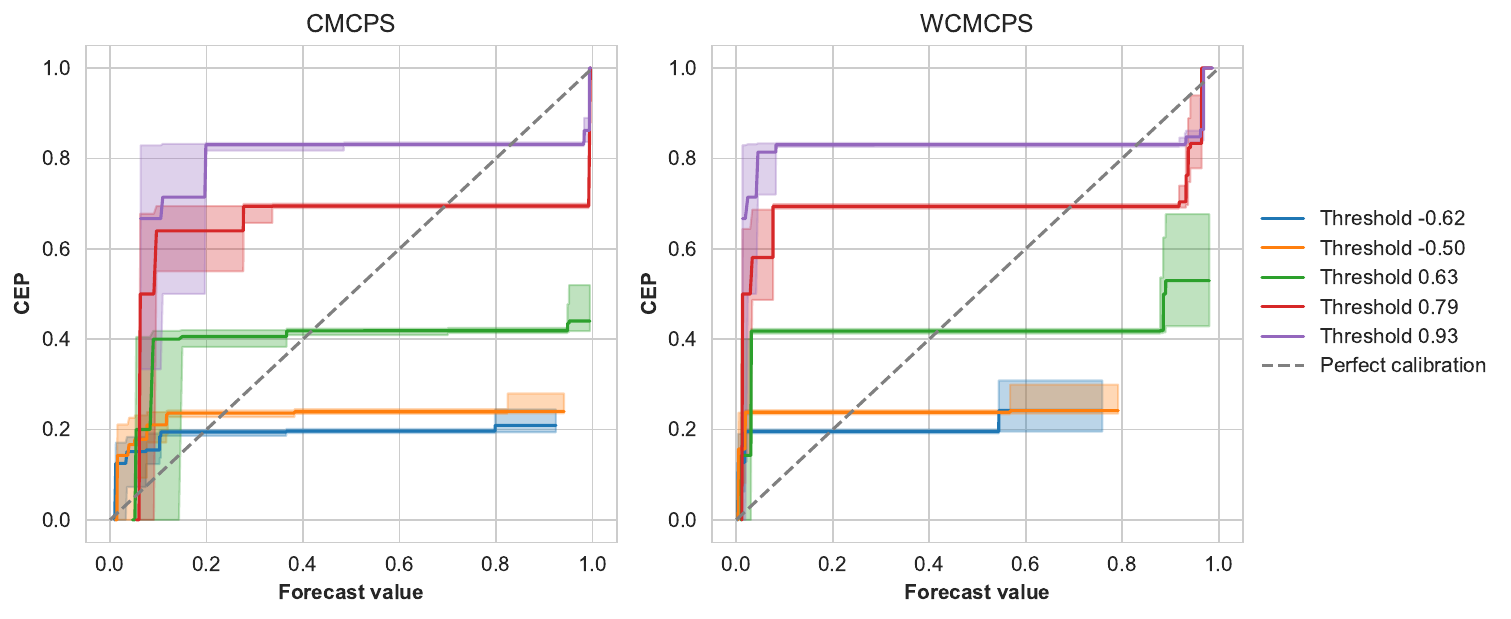}
    \caption{CMCPS / WCMCPS}
    \label{fig:aav_reldiag_lambda4_cmcps}
  \end{subfigure}\hfill
  \begin{subfigure}[b]{0.8\textwidth}
    \centering
    \includegraphics[width=\linewidth]{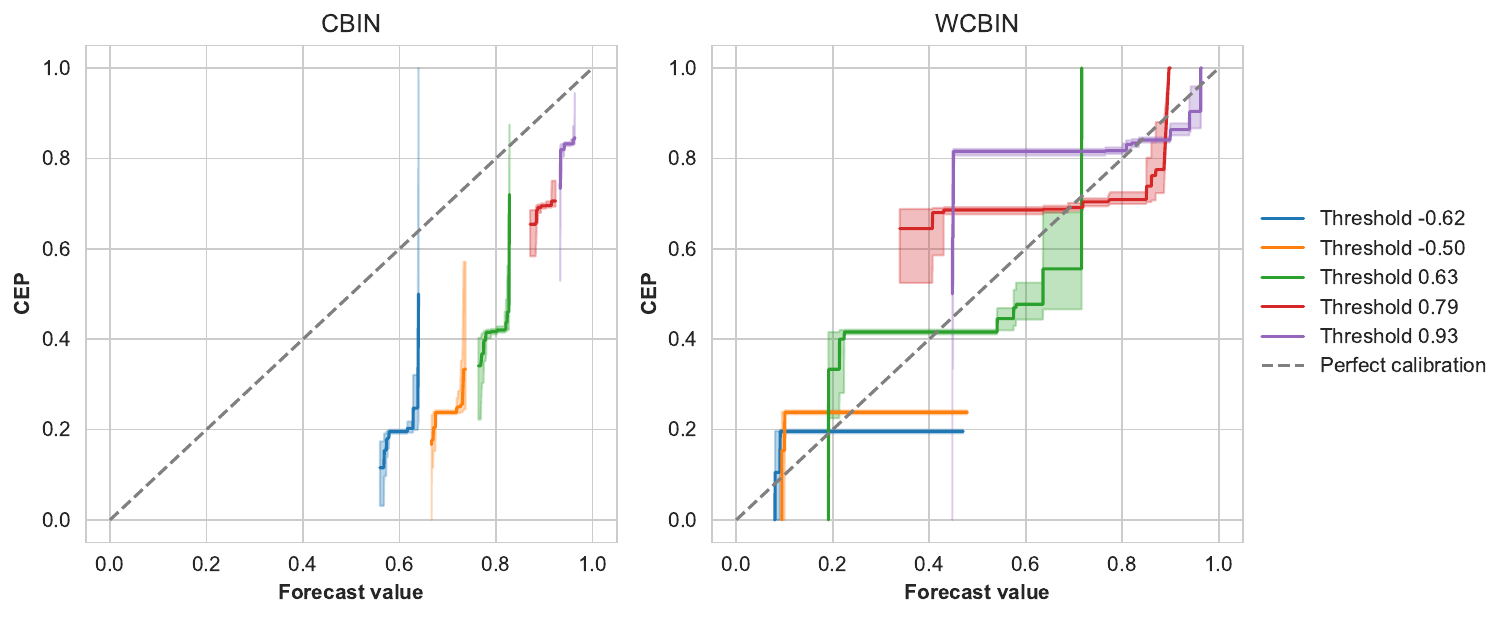}
    \caption{CBIN / WCBIN}
    \label{fig:aav_reldiag_lambda4_cbin}
  \end{subfigure}\hfill
  \begin{subfigure}[b]{0.8\textwidth}
    \centering
    \includegraphics[width=\linewidth]{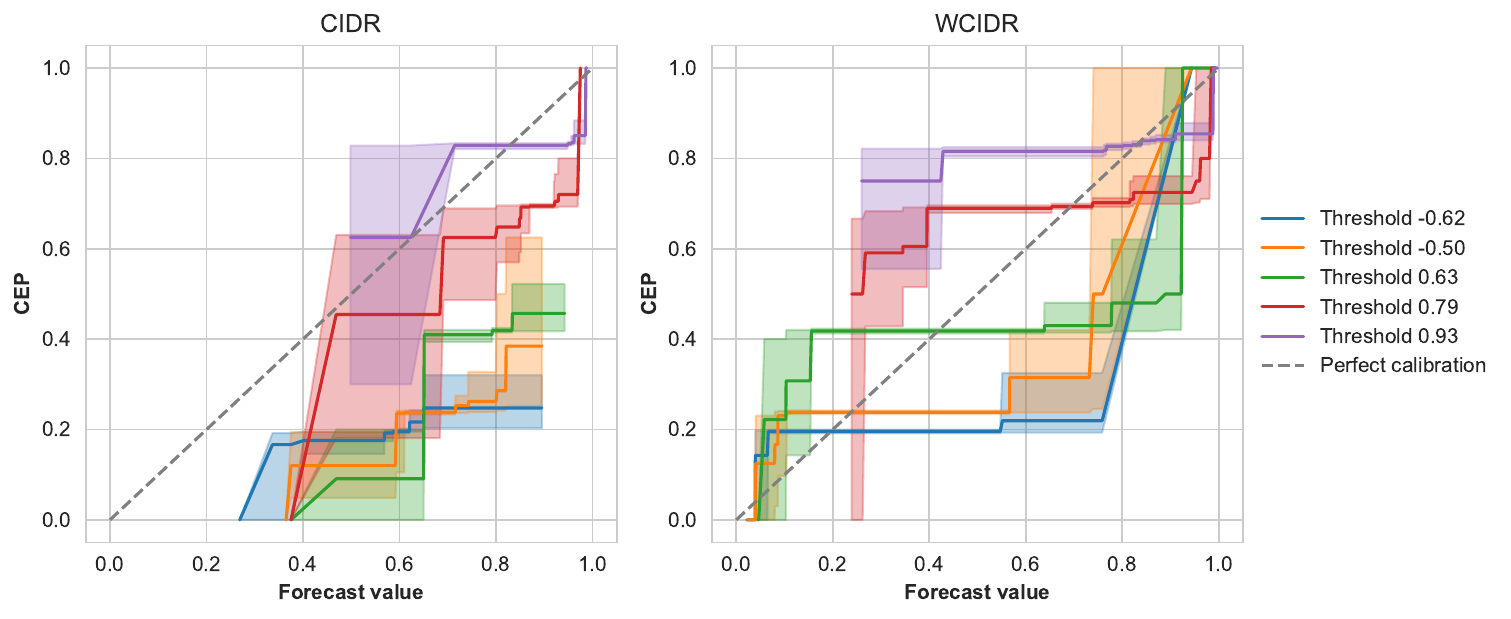}
    \caption{CIDR / WCIDR}
    \label{fig:aav_reldiag_lambda4_cidr}
  \end{subfigure}
  \caption{CORP reliability diagrams \citep{dimitriadis_stable_2021}. The x-axis shows predicted probabilities, and the y-axis shows the observed frequency of the outcome. A perfectly calibrated model lies along the diagonal: predictions match observed outcomes. Curves above the diagonal indicate underestimation, while curves below indicate overestimation. Threshold-wise CORP reliability diagrams for the AAV design task at inverse temperature $\lambda = 4$. Each panel shows weighted (right) and unweighted (left) calibration curves across several extreme packaging thresholds, comparing FCS-aware calibration (W*) with its unweighted counterpart. The shaded areas represent 95\% confidence intervals for the reliability diagram, constructed using bootstrapping.}
  \label{fig:aav_reldiag_lambda4}
\end{figure}

\begin{figure}
  \centering
  \begin{subfigure}[b]{0.8\textwidth}
    \centering
    \includegraphics[width=\linewidth]{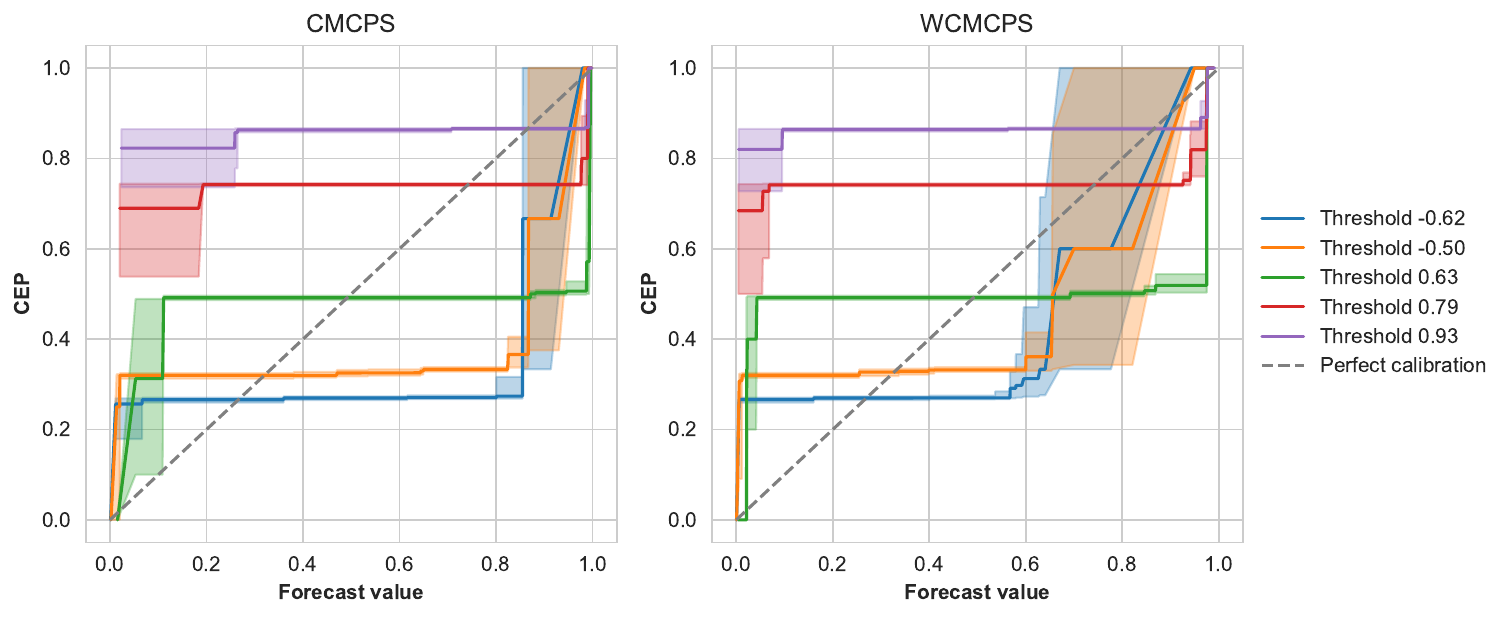}
    \caption{CMCPS / WCMCPS}
    \label{fig:aav_reldiag_lambda3_cmcps}
  \end{subfigure}\hfill
  \begin{subfigure}[b]{0.8\textwidth}
    \centering
    \includegraphics[width=\linewidth]{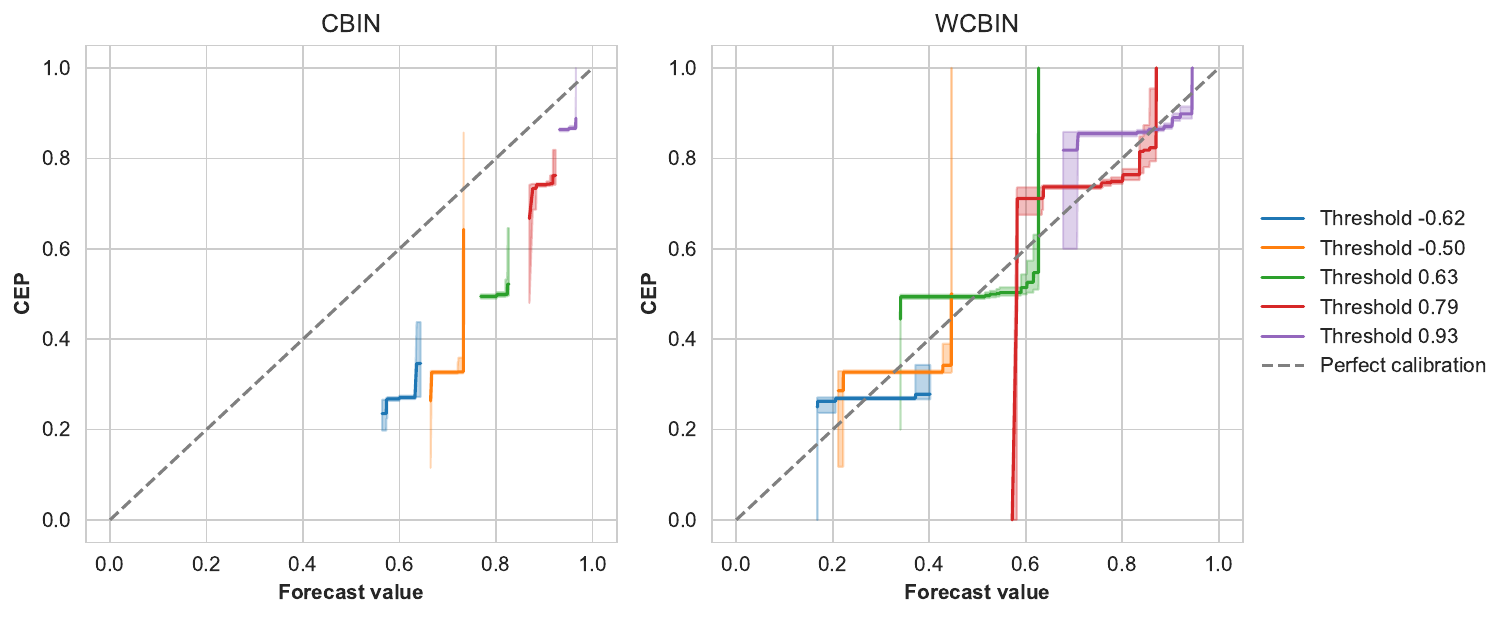}
    \caption{CBIN / WCBIN}
    \label{fig:aav_reldiag_lambda3_cbin}
  \end{subfigure}\hfill
  \begin{subfigure}[b]{0.8\textwidth}
    \centering
    \includegraphics[width=\linewidth]{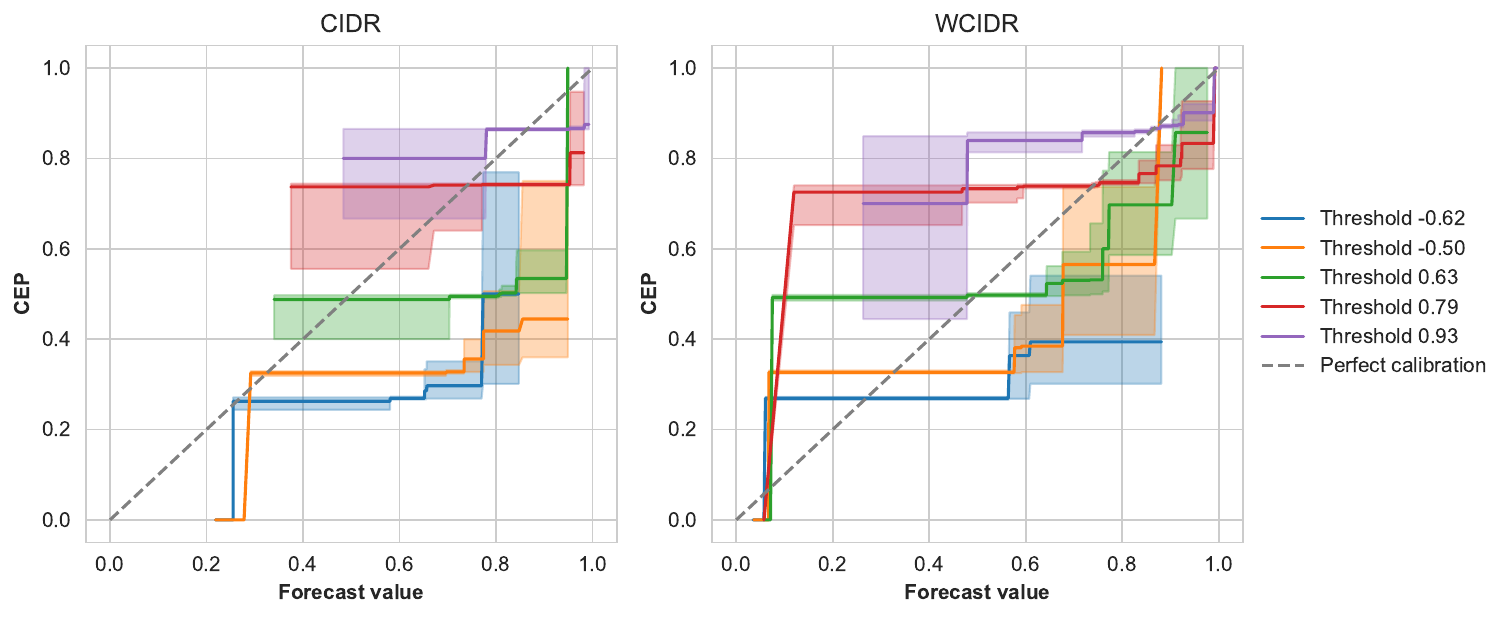}
    \caption{CIDR / WCIDR}
    \label{fig:aav_reldiag_lambda3_cidr}
  \end{subfigure}
  \caption{CORP reliability diagrams \citep{dimitriadis_stable_2021}. The x-axis shows predicted probabilities, and the y-axis shows the observed frequency of the outcome. A perfectly calibrated model lies along the diagonal: predictions match observed outcomes. Curves above the diagonal indicate underestimation, while curves below indicate overestimation. Threshold-wise CORP reliability diagrams for the AAV design task at inverse temperature $\lambda = 3$. Each panel shows weighted (right) and unweighted (left) calibration curves across several extreme packaging thresholds, comparing FCS-aware calibration (W*) with its unweighted counterpart. The shaded areas represent 95\% confidence intervals for the reliability diagram, constructed using bootstrapping.}
  \label{fig:aav_reldiag_lambda3}
\end{figure}

\begin{figure}
  \centering
  \begin{subfigure}[b]{0.8\textwidth}
    \centering
    \includegraphics[width=\linewidth]{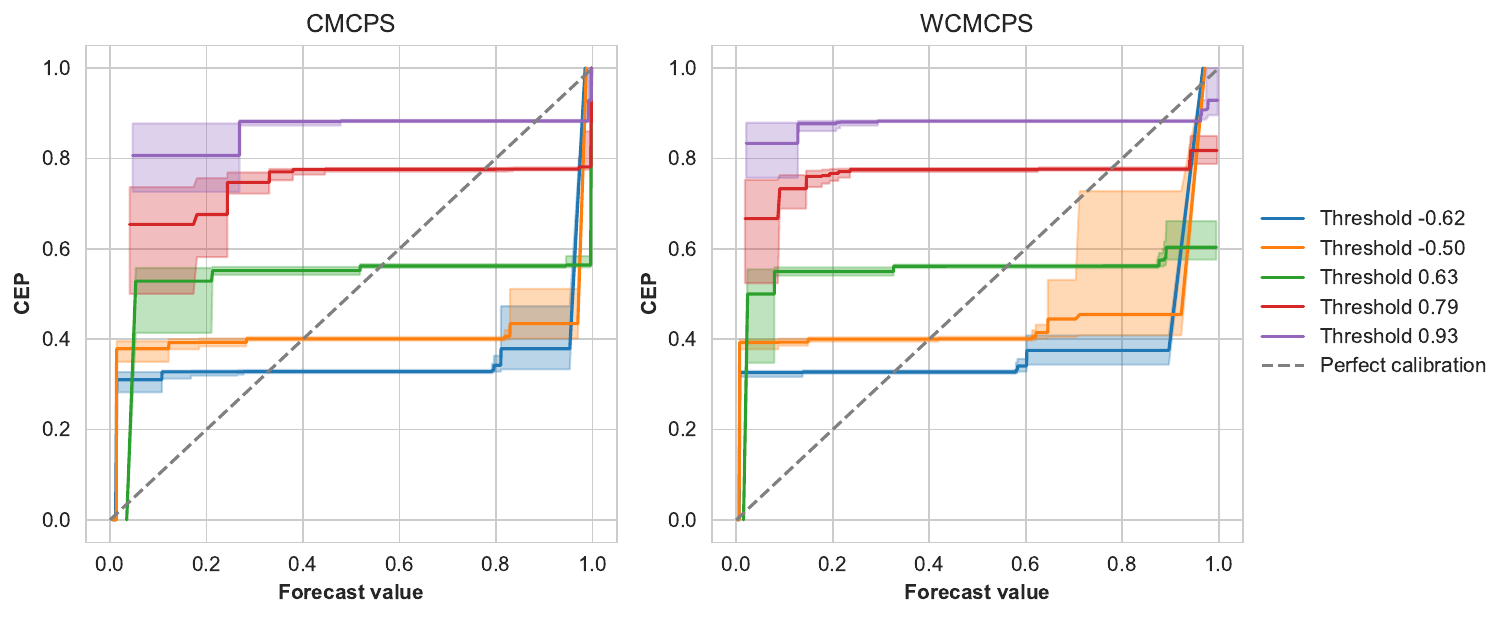}
    \caption{CMCPS / WCMCPS}
    \label{fig:aav_reldiag_lambda2_cmcps}
  \end{subfigure}\hfill
  \begin{subfigure}[b]{0.8\textwidth}
    \centering
    \includegraphics[width=\linewidth]{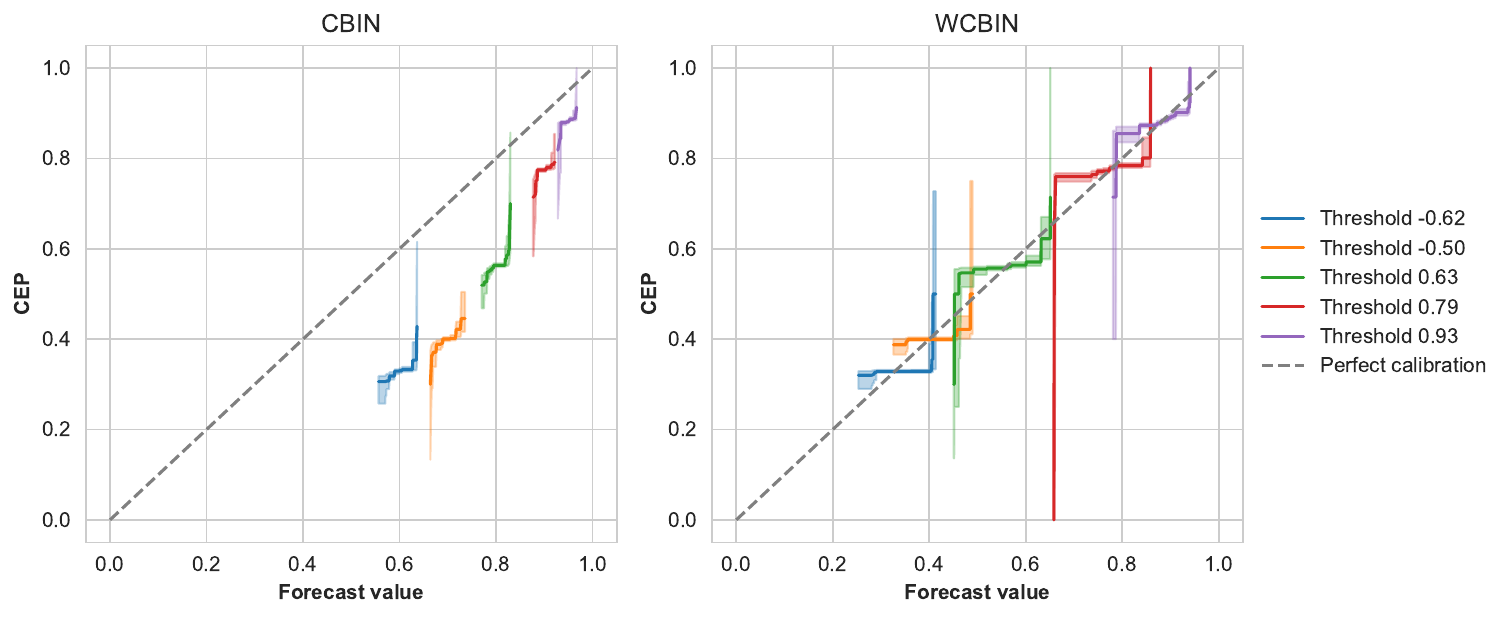}
    \caption{CBIN / WCBIN}
    \label{fig:aav_reldiag_lambda2_cbin}
  \end{subfigure}\hfill
  \begin{subfigure}[b]{0.8\textwidth}
    \centering
    \includegraphics[width=\linewidth]{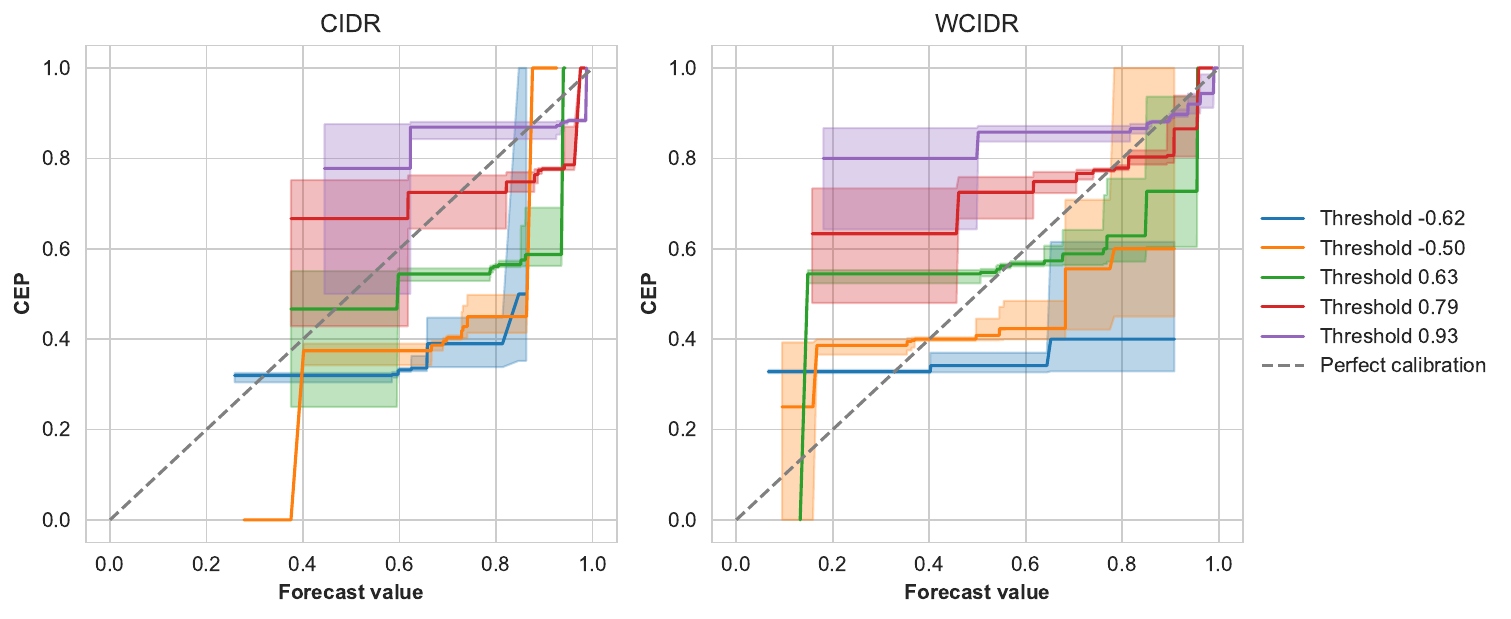}
    \caption{CIDR / WCIDR}
    \label{fig:aav_reldiag_lambda2_cidr}
  \end{subfigure}
  \caption{CORP reliability diagrams \citep{dimitriadis_stable_2021}. The x-axis shows predicted probabilities, and the y-axis shows the observed frequency of the outcome. A perfectly calibrated model lies along the diagonal: predictions match observed outcomes. Curves above the diagonal indicate underestimation, while curves below indicate overestimation. Threshold-wise CORP reliability diagrams for the AAV design task at inverse temperature $\lambda = 2$. Each panel shows weighted (right) and unweighted (left) calibration curves across several extreme packaging thresholds, comparing FCS-aware calibration (W*) with its unweighted counterpart. The shaded areas represent 95\% confidence intervals for the reliability diagram, constructed using bootstrapping.}
  \label{fig:aav_reldiag_lambda2}
\end{figure}

\begin{figure}
  \centering
  \begin{subfigure}[b]{0.8\textwidth}
    \centering
    \includegraphics[width=\linewidth]{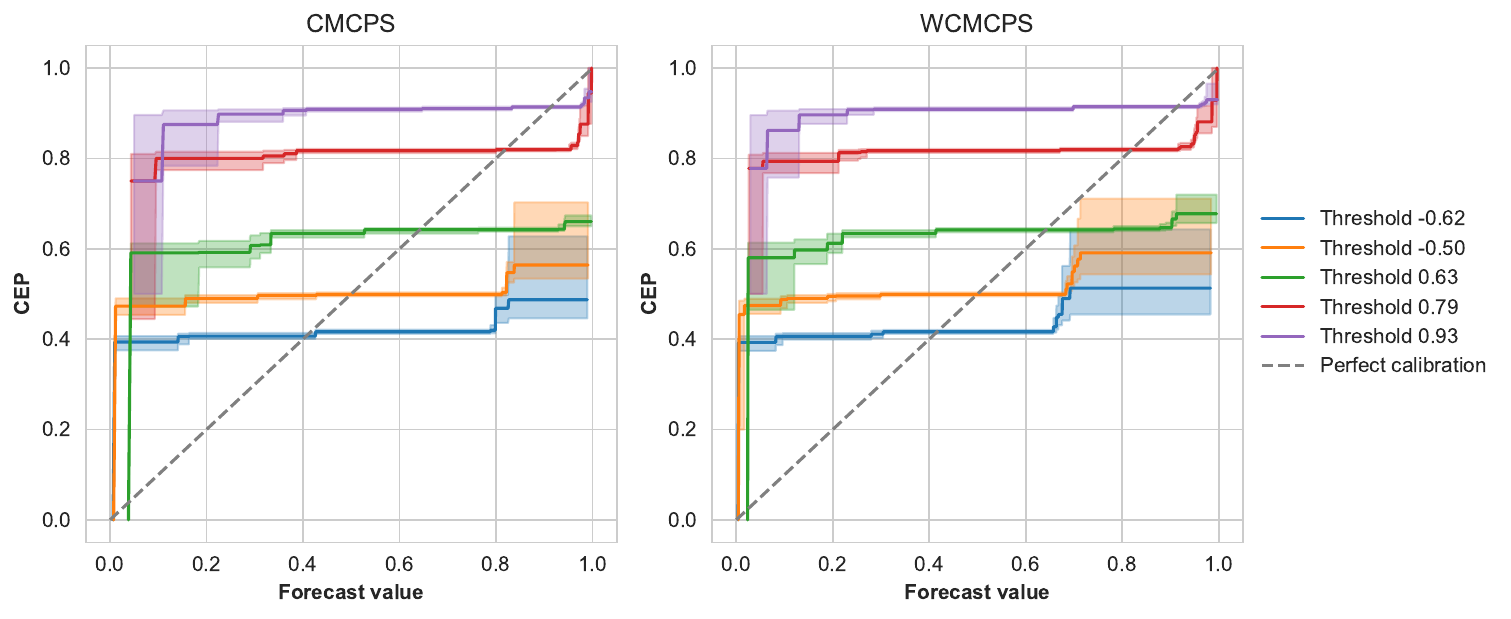}
    \caption{CMCPS / WCMCPS}
    \label{fig:aav_reldiag_lambda1_cmcps}
  \end{subfigure}\hfill
  \begin{subfigure}[b]{0.8\textwidth}
    \centering
    \includegraphics[width=\linewidth]{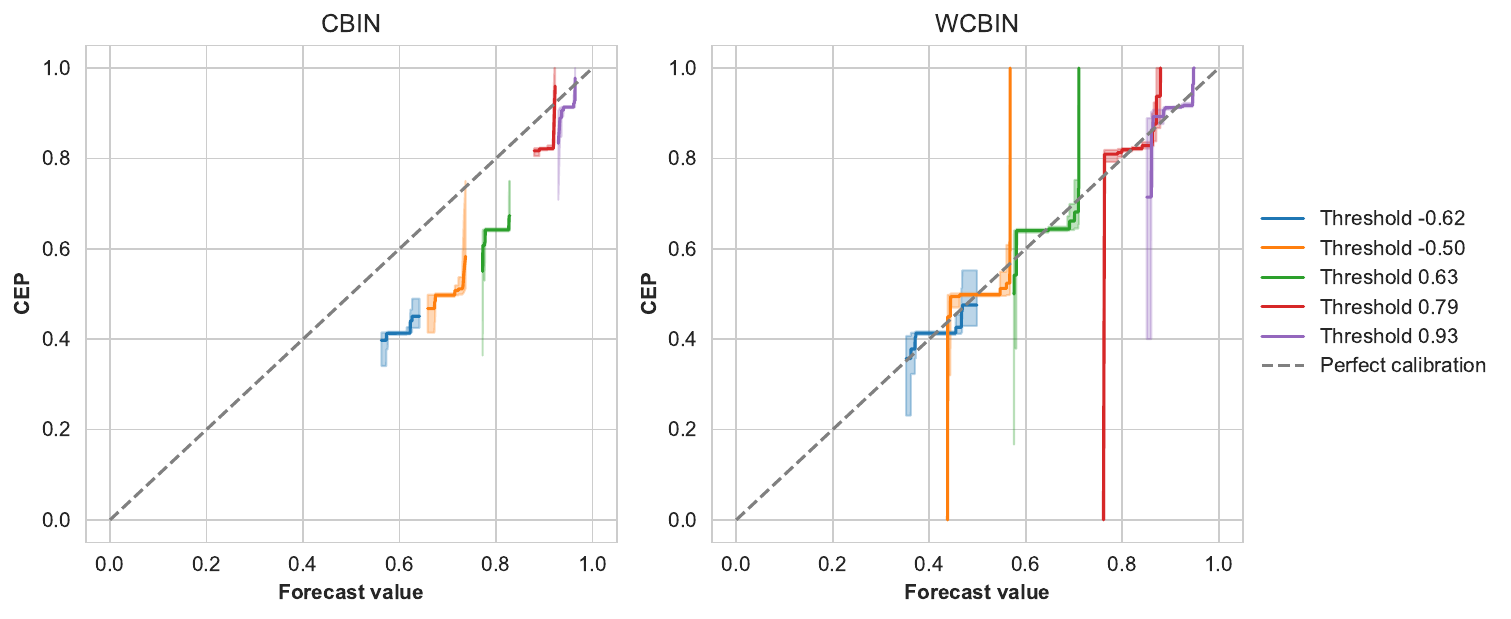}
    \caption{CBIN / WCBIN}
    \label{fig:aav_reldiag_lambda1_cbin}
  \end{subfigure}\hfill
  \begin{subfigure}[b]{0.8\textwidth}
    \centering
    \includegraphics[width=\linewidth]{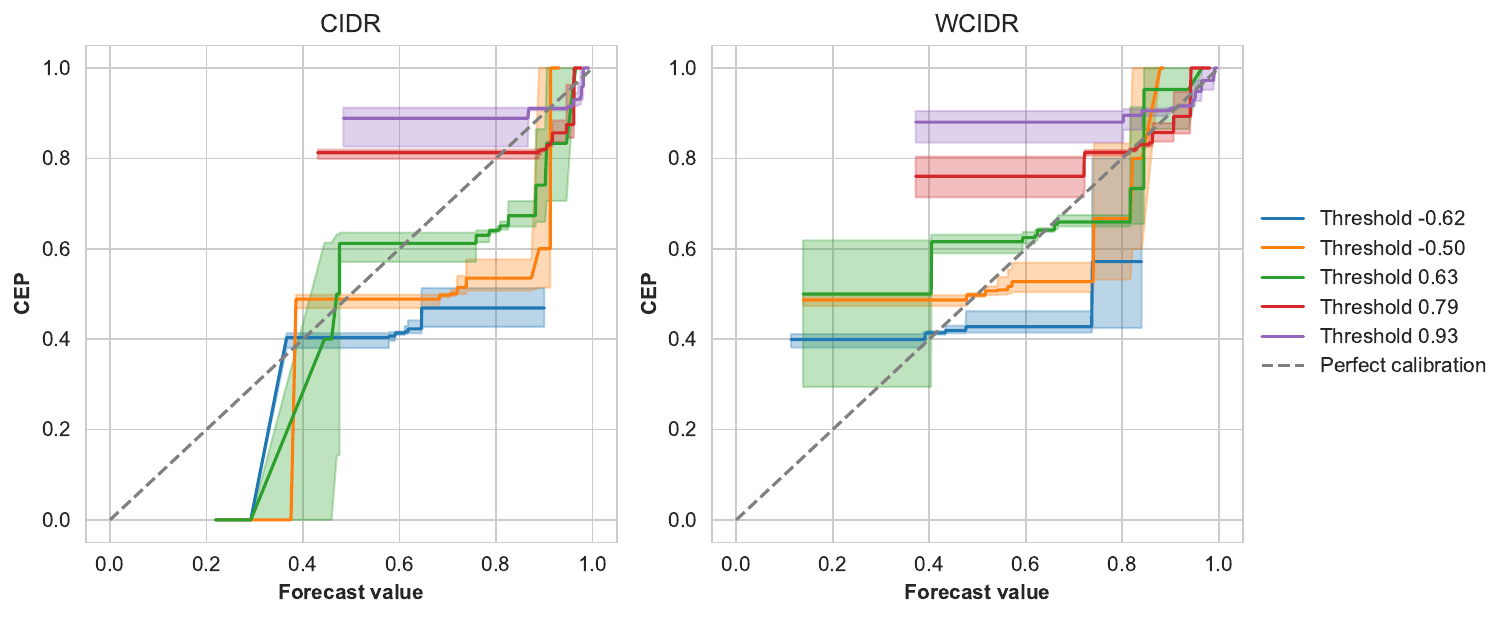}
    \caption{CIDR / WCIDR}
    \label{fig:aav_reldiag_lambda1_cidr}
  \end{subfigure}
  \caption{CORP reliability diagrams \citep{dimitriadis_stable_2021}. The x-axis shows predicted probabilities, and the y-axis shows the observed frequency of the outcome. A perfectly calibrated model lies along the diagonal: predictions match observed outcomes. Curves above the diagonal indicate underestimation, while curves below indicate overestimation. Threshold-wise CORP reliability diagrams for the AAV design task at inverse temperature $\lambda = 1$. Each panel shows weighted (right) and unweighted (left) calibration curves across several extreme packaging thresholds, comparing FCS-aware calibration (W*) with its unweighted counterpart. The shaded areas represent 95\% confidence intervals for the reliability diagram, constructed using bootstrapping.}
  \label{fig:aav_reldiag_lambda1}
\end{figure}

\clearpage
%%%%%%%%%%%%%%%%%%%%%%%%%%%%%%%%%%%%%%%%%%%%%%%%%%%%%%%%%%%%

% \newpage
% \input{checklist.tex}

\end{document}